\documentclass{article}
\ifdefined\pdfsuppressptexinfo\pdfsuppressptexinfo=-1\fi

\usepackage[preprint]{neurips_2026}
\setcitestyle{authoryear,round}
\makeatletter
\renewcommand{\@notice}{\enlargethispage{2\baselineskip}}
\makeatother

\usepackage[T1]{fontenc}
\usepackage[utf8]{inputenc}
\usepackage[english]{babel}
\usepackage{microtype}
\usepackage{amsmath,amssymb,amsthm,amsfonts,mathtools,nicefrac}
\usepackage{graphicx,booktabs,tabularx,array,float,subfig}
\usepackage{multirow}
\usepackage{algorithm,algpseudocode,pdflscape,needspace}
\usepackage{wrapfig}
\usepackage[framemethod=tikz]{mdframed}
\usetikzlibrary{arrows.meta,positioning,calc}
\usepackage{xcolor,enumitem,etoolbox}
\usepackage{hyperref,url}
\hypersetup{
  hidelinks,
  pdftitle={Uncovering the Turing Test's Latent Signal: Indistinguishability as a Reward},
  pdfauthor={Emanuele Rimoldi, Riccardo Neumarker, Tomaso Poggio, Daniel Mitropolsky}}
\DeclareUnicodeCharacter{2212}{-}
\newcommand{\E}{\mathbb{E}}
\newtheorem{proposition}{Proposition}
\mdfdefinestyle{thesisresult}{
  linecolor=black!55,linewidth=0.55pt,roundcorner=0pt,
  backgroundcolor=white,innerleftmargin=0.75em,innerrightmargin=0.75em,
  innertopmargin=0.55em,innerbottommargin=0.45em,
  skipabove=0.7em,skipbelow=0.7em}
\surroundwithmdframed[style=thesisresult]{proposition}

\newcommand{\pown}{p_{\mathrm{ind}}}

\newcommand{\meanscore}[2]{#1\,{\scriptsize$\pm$#2}}
\newcommand{\singlescore}[1]{#1}
\newcommand{\identitymean}[2]{\meanscore{#1}{#2}}
\newcommand{\identitygain}[1]{$+$#1}

\newcolumntype{Y}{>{\raggedright\arraybackslash}X}
\newmdenv[
  backgroundcolor=blue!3,linecolor=blue!50!black,linewidth=.45pt,
  innertopmargin=6pt,innerbottommargin=6pt,
  innerleftmargin=7pt,innerrightmargin=7pt,
  nobreak=true,skipabove=4pt,skipbelow=8pt
]{promptcard}

\title{\centering Uncovering the Turing Test's Latent Signal:\\
  Indistinguishability as a Reward}%

\author{%
\parbox{\dimexpr\textwidth-4pt\relax}{\centering
\begin{tabular}{@{}c@{\hspace{1.1em}}c@{\hspace{1.1em}}c@{\hspace{1.1em}}c@{}}
Emanuele Rimoldi$^{1,2}$ & Riccardo Neumarker$^{1,3}$ & Tomaso Poggio$^{1}$ & Daniel Mitropolsky$^{1,4}$\\[4pt]
\multicolumn{4}{c}{\normalfont $^{1}$MIT \quad $^{2}$EPFL \quad $^{3}$ETH Zurich \quad $^{4}$Tufts}
\end{tabular}\\[5pt]
{\normalfont\texttt{\{erimoldi,rneumk,tpoggio,mitropol\}@mit.edu}}}}

\begin{document}

\begingroup
\setlength{\tabcolsep}{2pt}
\maketitle
\endgroup
\raggedbottom

\begin{abstract}
Can a model learn by trying to fool another model? We show that it can.
Inspired by the Turing Test and its recent generalization to arbitrary
distinguishers, we introduce a \emph{domain-based teacher-Turing-Test}, in which a
stronger teacher model serves as both the \emph{target} of imitation and the
\emph{distinguisher}, and a weaker student model is trained to make the teacher
attribute responses to itself. We use this self-attribution (or fooling) probability as
a reinforcement learning reward, without ever providing the student ground-truth
answers, nor requiring the teacher to generate responses. We test our approach
independently in four domains with verifiable answers: CommonsenseQA
(reasoning), GSM8K (mathematics), KodCode (programming), and Spider (SQL),
training a Qwen2.5-1.5B student/imitator model using DeepSeek V4 Flash as
teacher/distinguisher. Beyond improving at passing the Turing Test, the
student improves on independent, held-out evaluations in all domains and runs. To probe \emph{why}
being mistaken for the teacher can improve capability, our analyses reveal that
self-attribution is associated with correctness -- substantially more than
formatting changes -- and that it rewards \emph{partial} correctness. Our results
demonstrate a new, simple teacher-student training paradigm with a private and/or high-cost teacher. They also suggest that indistinguishability need not merely measure behavior -- it can also be a mechanism for learning.
\end{abstract}

\section{Introduction}
\label{sec:introduction}
In 1950, Alan Turing introduced the \emph{Turing Test} \textcolor{black}{\citep{turing1950computing}} -- an Artificial Intelligence (AI) system passes the Turing Test if a human judge cannot reliably distinguish, in a text-based conversation, between the AI system and another human. In this work, we ask: \textbf{can the Turing Test serve as an objective for \emph{training}?} We take as a starting point two very recent papers. The first uses reinforcement learning (RL) with a Turing-Test-based reward to train a student language model (LM) to produce human-like responses \textcolor{black}{\citep{wang2026turingrl}}. A concurrent paper introduces a novel \emph{Generalized Turing Test} (GTT) \textcolor{black}{\citep{mitropolsky2026gtt}} in which the \emph{distinguisher} and \emph{target of imitation} of the Turing Test can be an \emph{arbitrary LM} in place of a human. In this framework, an LM A ``passes the Turing Test \emph{against an LM B}'' if B cannot distinguish between itself and A imitating B.

\emph{Can the Turing Test against a stronger ``teacher'' LM be used as an RL-objective to successfully train a student model?} This contains two questions. Will the student become better at fooling the distinguisher? The second, and deeper, question is whether in doing so, the student will \emph{actually} become more intelligent in some sense, beyond simply its ability to fool the distinguisher by just mimicking its outputs?

To make the second question tractable, our idea is to restrict the Turing Test to a specific domain where answers are \emph{verifiable}. We introduce the \emph{domain-based teacher-Turing-Test}: a stronger teacher model is both the \emph{target} and \emph{distinguisher}, and a weaker student model is the imitator. We further simplify by making the Turing Test a single round -- the imitator responds to a single query. A domain-specific question set is divided into training and test components. The training signal, which we call the \emph{indistinguishability reward}  (Section~\ref{sec:turing-test-and-reward}), is the log probability that the teacher attributes the student's output to having come from its own model type.

In our main experiments, we use a small student (actor/imitator) model, Qwen2.5-1.5B, and as teacher (target/distinguisher) DeepSeek \textcolor{black}{V4 Flash.} We train the student model using GRPO, a standard RL algorithm, on our \emph{indistinguishability reward} in four domains -- \textcolor{black}{CommonsenseQA (commonsense reasoning), GSM8K (mathematics), KodCode (programming), and Spider (SQL generation)} -- for a fixed number of 8,000 RL rollouts per run, \textcolor{black}{with the optimization method of \citet{shao2024deepseekmath}}. \textbf{The student model improves on the held-out test in each domain, by \textcolor{black}{12.4, 13.0, 5.4, and 11.9} percentage points respectively} (Section \ref{sec:results}). We emphasize that ground-truth labels do not enter the RL updates; they are used only in the offline, independent evaluation of whether the reward transferred verifiable task capability. In Section \ref{sec:closed-teacher}, we present an additional variant for \textbf{fully-black-box} teachers whose logits are inaccessible; with GPT-5 Mini as distinguisher, learning to fool improves accuracy in 3 of 4 domains.

The last part of our paper asks \emph{how} this is happening.  First, we show that the distinguisher's answers are strongly correlated with correctness (a prerequisite for the Turing Test to improve a student). Additional experiments reveal that (1) changing a response's \emph{correctness} affects the reward much more strongly than adding a formatting prefix (despite the distinguisher's objective being only to determine provenance) and (2) indistinguishability rewards \emph{partial} improvements even among incorrect solutions. This latter point is why, for CommonsenseQA and Spider, our indistinguishability reward outperforms the same RL method but using the verifier's correctness bit as the reward.

Overall, our contributions are:
\begin{enumerate}
\item We introduce the \emph{domain-based teacher-Turing-Test}, and a new RL reward rooted in \emph{indistinguishability} that is label- and generation-free.
\item We train a weaker student model on four separate domains, and show improved performance on a held-out, independent evaluation in all four. %
This is evidence that the objective of the Turing Test -- fooling a distinguisher -- \emph{can} capture a meaningful signal related to capability.
\item We compare indistinguishability-reward RL to several other methods, including SFT (training directly on teacher responses), RL with a correctness oracle, and RL with \emph{distinguisher}-judged correctness. These other rewards are \emph{correctness}-based, in that either an answer or a correctness-score is provided. The Turing-based method is within 3.1 percentage points of the best method in every domain. 
\item Finally, we probe the nature of the information carried by the reward. We find a correlation between distinguisher judgments and correctness, a greater effect of correctness on the reward over formatting, and the ability of our method to reward \emph{partial} correctness.
\end{enumerate}

In addition to its implications in the theory of indistinguishability as a basis for defining intelligence, our results can also be seen more generally as a teacher-student training paradigm for \emph{black-box} or \emph{semi-black-box}, \emph{high-cost}, and/or \emph{private} settings. Namely: our teacher is a \emph{(semi)-black-box} from which only log-probabilities of two tokens, or repeated single-token outputs, are needed. 

\section{Related Work}

Alan Turing introduced the \emph{imitation game}, later known as the Turing Test, in 1950 \citep{turing1950computing}. Since then it has been hotly debated whether the Turing Test is a good definition of intelligence, i.e. that an AI system passing the Turing Test may not be intelligent in any meaningful sense (e.g. \citep{block1981psychologism,french1990subcognition,
hayes1995turing}). Our work directly builds on two recent papers: in the first use of a Turing-Test-variant as an RL reward, a model is optimized to imitate specific human-users, and they find that the method \emph{works}: the trained model fools humans and other LLMs into thinking they are human-user responses \citep{wang2026turingrl}. They compare their method to SFT, as we do in this work. Concurrently, another paper introduced the \emph{Generalized Turing Test}, the basic idea of which is that both the actor \emph{and} the distinguisher/target of the imitation are arbitrary language models \citep{mitropolsky2026gtt}. In the Discussion, they pose the enticing idea of a GAN-like \emph{training} application: train the actor to fool the distinguisher, while training the distinguisher to ask better questions and/or distinguish better. Our work makes progress on the first part of this program. \emph{This work combines the core concepts of both papers:} we use the Generalized Turing Test as an RL reward, where imitation is of a \emph{more capable} model, to ask whether training to improve fooling also improves real task capability.

Our work relates to several subfields in AI. First is the \emph{self-recognition} literature, studying the ability of models to identify outputs as their own or not; evidence is mixed -- some evaluators recognize their generations \citep{panickssery2024self}, while other studies find no consistent ability \citep{davidson2024selfrecognition}. Another connected field is student-teacher training algorithms and distillation. Common methods include sequence-level distillation \citep{kim2016seqkd,magister2022reasoning}, and on-policy distillation \citep{agarwal2024gkd}. Fine-tuning trains the model to make its own outputs indistinguishable from reference samples \citep{chen2024spin}, as opposed to the generator of those references as in the Turing Test. Our method structurally resembles other RL approaches that use feedback from a task verifier or AI judge; e.g. a correctness oracle given by a task-specific verifier  \citep{shao2024deepseekmath}, or a binary correctness judgment provided by a stronger LLM \citep{shen2026llmjudge} -- these two methods give us two of the comparison baselines presented in Section \ref{sec:experimental-setup}. As we explore in Section \ref{sec:reward-characterization}, our approach works because the distinguisher's answers are correlated with correctness. While we find evidence for this, self-preference bias in models, including exploiting this at the expense of accuracy, has been studied \citep{zheng2023judging,panickssery2024self,zhou2026convincing}.

\section{Learning from Indistinguishability}
\label{sec:turing-test-and-reward}
\subsection{Problem Formulation}
In this work, we set out to study whether the Turing Test can be used as a training method. In order to make it possible to verify whether training improves a model's capability \emph{in general}, we study a \emph{domain-restricted, one-round} Turing Test. Let $\mu$ be a \emph{domain}, by which we mean a distribution over questions for which there is a verifier $V$: given a question $x \sim \mu$ and candidate solution $y$, $V(x,y)$ outputs $1$ if $y$ is a correct solution to $x$.

 In our framework, there are two kinds of models. The first kind is a \emph{student} or \emph{actor} model $A_\theta$, parametrized by $\theta$. On input $x \sim \mu$, we interpret the output of $A_\theta(x)$ as an attempted solution to $x$.\footnote{We always take $A_\theta$ to be a general-purpose language model, but in principle, our method applies to any model $A_\theta$ that outputs solutions just on inputs from $\mu$.} 

The second kind is a \emph{teacher} or \emph{distinguisher} model $D$. Such a model has two modes: (1) a \emph{solve} mode (denoted with $\mathtt{solve}$ in the first argument) -- given $x \sim \mu$, $D(\mathtt{solve}, x)$ is an attempted solution to $x$, and (2) \emph{distinguish} mode (denoted with $\mathtt{dist}$ in the first argument) -- given $x$ and $y$, $D(\mathtt{dist},x,y) \in \{0,1\}$ is a guess as to whether $y$ was generated by $D$ on question $x$ (that is, whether $y \sim D(\mathtt{solve},x)$ or whether $y \sim A_\theta(x)$). When clear from context, we will drop the keyword argument, writing just $D(x)$ or $D(y)$. Practically, one should think of the modes being specified by fixed pre-prompts.

We now define the \textbf{\emph{domain-based teacher-Turing-Test}}%
: a question $x \sim \mu$ is sampled. Let $b \in \{0,1\}$ be the correct answer ($b$ could be randomly sampled). If $b=0$ we let $y \sim A_\theta(x)$, and if $b=1$ we let $y \sim D(\mathtt{solve}, x)$. Finally sample $g \sim D(\mathtt{dist}, x, y)$. We say the \emph{distinguisher} succeeded (or won) if $g=b$, and that the actor succeeded, or \emph{fooled} the distinguisher, if $g \neq b$. 

We denote the \emph{accuracy} of a model $M$ on $\mu$ by 
$$E(M,\mu) := \mathbb{E}_{x \sim \mu, y \sim M(x)}[V(x,y)] = \Pr_{x \sim \mu, y \sim M(x)}[V(x,y) = 1]$$ 

For the distinguisher, denote $E(D,\mu) := E(D(\mathtt{solve},\cdot),\mu)$. It will be useful to define the \emph{capability gap on} $\mu$ -- how much better the distinguisher is at solving $\mu$ than $A_\theta$: 
$$ \gamma_\mu = E(D,\mu) - E(A_\theta,\mu) $$

It is also useful to define the \emph{distinguishability gap} -- that is, how much better the distinguisher is at recognizing its own outputs than the actor's:
$$ \delta_\mu = \mathbb{E}_{x \sim \mu,~y\sim D(\mathtt{solve},x)}[D(\mathtt{dist},x,y)] - \mathbb{E}_{x \sim \mu,~y\sim A_\theta(x)}[D(\mathtt{dist},x,y)]$$

When $\gamma_\mu > 0$ and $\delta_\mu > 0$, we might hope that by training $A_\theta$ to increase the probability of fooling the distinguisher (i.e. decreasing $\delta_\mu$), $\gamma_\mu$ will also decrease. \textbf{This is the main empirical finding demonstrated by this work}. Observe that modifying $\theta$ can change only the \emph{second} term in $\delta_\mu$ above. This probability --  that of a student's output fooling the distinguisher -- is so important we give it its own name:
$$ \pown := \Pr_{x \sim \mu,~y\sim A_\theta(x)}\big[D(\mathtt{dist},x,y) = 1\big] $$

We will denote the probability of $D$ outputting 1 on a \emph{specific} question--answer pair $(x,y)$ with $\pown(x,y) := \Pr[D(\mathtt{dist},x,y) = 1 \mid x,y]$. We have $\pown = \mathbb{E}_{x\sim \mu,~y\sim A_\theta(x)}[\pown(x,y)]$.

\subsection{Reinforcement Learning with the Turing Test}
We now present a \emph{reinforcement method} based on our variant of the Turing Test. At its core, our method seeks to maximize the following \emph{unregularized} objective:
\begin{equation}
\label{equation:unregobj}
\mathbb{E}_{x \sim \mu,~y \sim A_\theta(x)}[\log \pown(x,y)] 
\end{equation} 

How is this related to $\pown$? By concavity of $\log$ (Jensen's inequality), we have $\log \pown \geq \mathbb{E}[\log \pown(x,y)]$, so our method maximizes a \emph{lower bound} on $\log\pown$, the log of the plain expected fooling probability. This is the standard maneuver in RL with a \emph{probability} reward as a kind of smoothing of values in $(0,1)$. We also empirically compare several other functions of $\pown(x,y)$ as the reward and find $\log \pown(x,y)$ gave the best improvement, see Appendix~\ref{app:reward-forms}.

Figure~\ref{fig:overview} summarizes how this reward enters the training loop.

\begin{figure}[H]
\centering
\includegraphics[width=\textwidth]{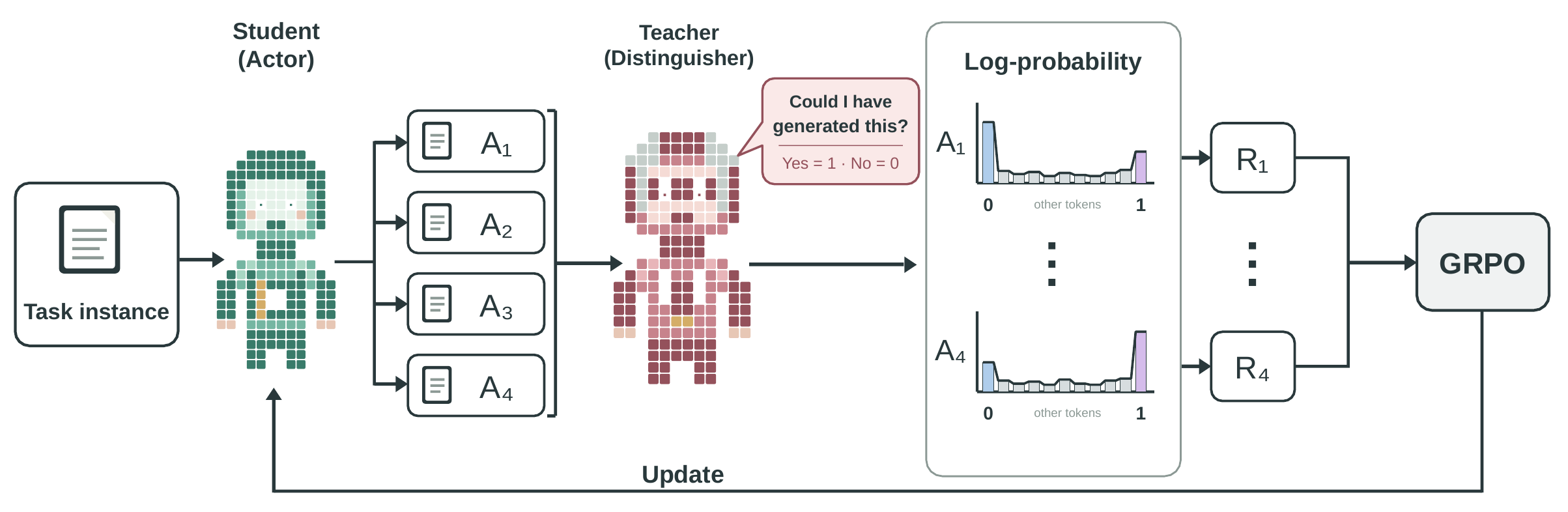}
\caption{Training with a frozen self-attribution judge. For each task, the student samples
four responses; the teacher assigns first-token log-probabilities to the binary labels 0 and 1
for each response. These yield rollout rewards for GRPO, which updates only the student. Only
the first and last response distributions are shown; no gold answers or teacher-generated
responses enter training.}
\label{fig:overview}
\end{figure}

Next we describe the method of GRPO with respect to our objective. The expectation in our objective -- equation \ref{equation:unregobj} -- is $\mathbb{E}_{x\sim\mu}\mathbb{E}_{y\sim A_\theta}$. Suppose that for every possible $x \sim \mu$, in order to approximate the inner expectation we drew $G$ responses $y_1,\ldots,y_G \sim A_{\theta_{\mathrm{old}}}(x)$ ($\theta_{\mathrm{old}}$: the parameters before the update). For each $i \in [G]$, we define the \emph{local} reward $r_i = \log\pown(x,y_i)$. In each group, rewards are standardized (subtract the group mean and divide by the group standard deviation) to give the \emph{advantages} $\hat{r}_i$. Writing $p_\theta(y|x) := \Pr[y \leftarrow A_\theta(x)]$, the following is the unregularized GRPO objective:
\begin{equation} \label{equation:batchreward}
\mathbb{E}_{x \sim \mu,~y_{1:G} \sim A_{\theta_{\mathrm{old}}}(x)}\Big[\frac{1}{G}\sum_{i=1}^G \frac{p_\theta(y_i|x)}{p_{\theta_{\mathrm{old}}}(y_i|x)}\cdot \hat{r}_i \Big]
\end{equation}

That is, GRPO generates multiple ($G$) candidate responses to a question, and weights the probability ratio of each response by its (relative) reward; intuitively, a gradient update of this objective increases the likelihood of high-reward responses. Note that when $D$ is an LLM, the probability that it outputs $0$ or $1$ in a given context is determined by the \emph{logits} $\ell_0$ and $\ell_1$, with $\pown(x,y) = e^{\ell_1} / (e^{\ell_0} + e^{\ell_1})$. This gives a particularly simple form for $r_i = \ell_1^{(i)} - \log(e^{\ell_0^{(i)}} + e^{\ell_1^{(i)}})$.

In practice, a few additional tricks are often needed for stability of gradient-based updates. Most importantly a \emph{regularizer} is added to the loss to prevent post-training from changing the student model \emph{too} much (and ceasing to be a good language model). Denote by $A_0$ the student model at the beginning of RL training. Given regularization weight $\beta$, our final objective is
\begin{equation} \label{equation:objectivefinal}
\mathbb{E}_{x \sim \mu,~y_{1:G} \sim A_{\theta_{\mathrm{old}}}(x)}\Big[\frac{1}{G}\sum_{i=1}^G \frac{p_\theta(y_i|x)}{p_{\theta_{\mathrm{old}}}(y_i|x)}\cdot \hat{r}_i \Big] - \beta \cdot \mathbb{E}_{x\sim\mu}\big[D_{\mathrm{KL}}(A_\theta(\cdot|x) \,\|\, A_0(\cdot|x))\big]
\end{equation}

In GRPO, one optimization step is a gradient-based update of Equation \ref{equation:objectivefinal}. Also, in practice $\pown(x,y)$ is clipped to $[10^{-6},1-10^{-6}]$ before the logarithm, and the whole expectation is approximated by taking a \emph{minibatch} of questions and their $G$ answers -- these details (and the final empirical form of the GRPO objective) are given in Appendix Eq.~\ref{eq:drgrpo-loss}.

\section{Experimental Setup}
\label{sec:experimental-setup}
We will now seek to \emph{empirically} answer our \textbf{main questions: (1)} if we train a student (actor) model on the indistinguishability reward $\log \pown$, will it learn to fool the distinguisher? While prior work on Turing-Test-based rewards showed this for generating human-like responses \citep{wang2026turingrl}, will it work against a teacher LLM distinguisher? \textbf{(2)} Second -- and more interestingly -- what happens to the actor's \emph{capability} on a verifiable task as it learns to fool the teacher?    

We take Qwen2.5-1.5B-Instruct as the initial student/actor $A_0$, and DeepSeek V4 Flash as teacher/distinguisher $D$. We study four domains $\mu$: CommonsenseQA (reasoning questions),
GSM8K (mathematics), KodCode (programming), and Spider (SQL generation). The student is open-weight, which allows us to train it with RL; the teacher exposes the log-probabilities of its answers; and the combination \emph{leaves room to learn}. Before any training, we verify that the teacher is much stronger than the student in every domain (that is $\gamma_\mu \gg 0$), and that as distinguisher it accepts its own responses much more often than the student's ($\delta_\mu \gg 0$) -- see Appendix Table~\ref{tab:judge-gates}.

Each domain is divided into training and evaluation sets. Separately for each domain, we train the student using GRPO on the normalized $\log\pown$ objective (Equation~\ref{equation:objectivefinal}) with two passes over $1000$ training questions and $G=4$. 

\textbf{The main thesis and finding of this work will be that (a restriction of) the Turing Test can be used in reinforcement learning to improve capability}. However, as an additional investigation we compare to training the student with three other methods, two of them RL-based. These are summarized in Table \ref{tab:baselines}:
\begin{itemize}
\item SFT, which maximizes the log-likelihood of the student generating responses sampled from the teacher
\item GRPO-quality (Quality in later tables), structurally like our Turing-Test-based method, but the teacher is asked whether the response is correct. The reward is the log-probability of the positive answer (Appendix Eq.~\ref{eq:quality-reward})
\item GRPO-verifier (Oracle), where the reward is the verifier bit $V(x,y) \in \{0,1\}$
\end{itemize}

We emphasize that every comparison method is \emph{correctness}-based, in the sense that training is done using a reference answer or score based on \emph{correctness}. All additional baselines are trained from the same starting point $A_0$ and repeated for the same 3 seeds. GRPO-based baselines use the same two passes of 1,000 questions, whereas SFT learns from 2 passes over 1,000 teacher-generated answers. 

\begin{table}[htb]
\centering
\small
\setlength{\tabcolsep}{4pt}
\renewcommand{\arraystretch}{1.1}
\caption{Summary of experimental conditions (Turing-Test-based GRPO-indisting., called Indist.\ in later tables, and comparison baselines)}
\label{tab:baselines}
\begin{tabularx}{\textwidth}{@{}lYl@{}}
\toprule
\textbf{Condition} & \textbf{Supervision} & \textbf{Reward}\\
\midrule
$A_0$ & Initial actor. & (no update)\\
SFT & Teacher answer $y_D \sim D(x)$ per question $x$. & $-\log p_\theta(y_D\mid x)$ (loss)\\
GRPO-quality & Reference-free teacher judgment of correctness. & $\log\Pr[D(\mathtt{correct?},x,y)=1]$\\
GRPO-verifier & Task verifier. & $V(x,y)\in\{0,1\}$\\
GRPO-indisting. & Teacher self-attribution (our method). & $\log\pown(x,y)$\\
\bottomrule
\end{tabularx}
\end{table}

\section{Results}
\label{sec:results}

\subsection{The Actor Learns to Fool the Distinguisher}
\label{sec:learning-to-fool}

We first answer \textbf{Main Question 1} from the beginning of Section \ref{sec:experimental-setup}. We compute the average fooling probability $\pown$ after each block of 500 updates, and find that it \textbf{continuously increases throughout RL training}. This is shown in Figure \ref{fig:checkpoint-heldout}(a), which plots $\log \pown$.

The exact improvement differs between domains. On one end of the spectrum, CommonsenseQA sees the least gain, with $\pown$ rising from 17.6\% to 22.3\% (first and last 100 updates, mean over seeds). However, on this domain $D$ rarely claims any response at all -- even $\Pr_{x\sim \mu,~y \sim D(\mathtt{solve},x)}[D(\mathtt{dist},x,y)=1]$ is only 34.7\% (Appendix Table~\ref{tab:judge-gates}). At the other end, on Spider (SQL generation) most of $A_\theta$'s queries fool $D$ from the start; $\pown$ increases from 66.0\% to 78.0\%. Overall, results confirm the intended effect of training on fooling the distinguisher: student-generated responses are more likely to be attributed to the teacher across all 12 domain-seed combinations. This works across domains with very different initial distinguisher behaviors. 

\begin{figure}[t]
\centering
\includegraphics[width=\textwidth]{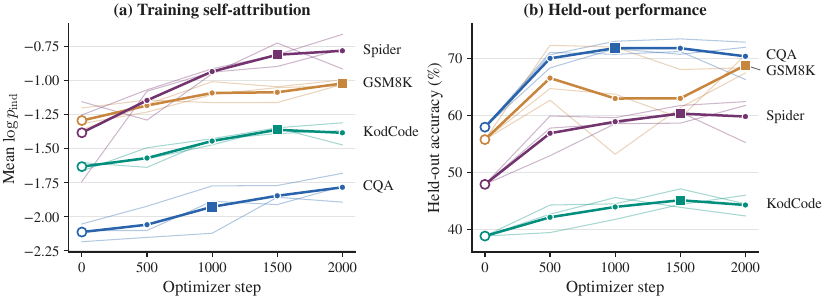}
\caption{Indist.\ training, one color per domain. \textbf{(a)} Mean $\log\pown$ of the training rollouts. \textbf{(b)} Held-out accuracy at every checkpoint. In both panels, thin lines are single seeds and bold lines
are means; squares mark the checkpoint with the highest mean held-out accuracy, which was not always the final checkpoint.}
\label{fig:checkpoint-heldout}
\end{figure}

\subsection{Accuracy Improves on the Held-Out Test Set}
\label{sec:accuracy-gains}

We now present our most surprising and significant empirical finding, which answers \textbf{Main Question 2} of Section \ref{sec:experimental-setup}. Since nothing in the framework or reward of the domain-based teacher-Turing-Test refers to correctness, a priori there is no guarantee that learning to fool the teacher will result in the student learning to solve the task. %

Figure \ref{fig:checkpoint-heldout}(b) shows that accuracy on the held-out evaluation set rises with training. Accuracy at the terminal checkpoint improves over $A_0$ in all domains (Table \ref{tab:main-results}) by 12.4 (CommonsenseQA), 13.0 (GSM8K), 5.4 (KodCode), and 11.9 (Spider) percentage points. We check that every improvement is significant under a McNemar test on paired held-out items after Holm correction over the three seeds (the weakest is KodCode seed 7, at 3.5 percentage points). 

As an aside, we find that most of the gain arrives early; after 500 updates, every domain achieves over half of its terminal gain. Figure \ref{fig:checkpoint-heldout}(b) shows which of the tested checkpoints achieved the best accuracy, often not the terminal one. While we used a fixed number of training steps for simplicity (and matching the fixed-budget protocol of \citet{wang2026turingrl}), a practical application of our method could compare checkpoints and/or use early stopping.

\begin{table}[t]
\centering
\small
\setlength{\tabcolsep}{6pt}
\caption{Held-out evaluation-set accuracy (\%) at the terminal checkpoint of the student/actor (Qwen2.5-1.5B), with mean and
standard deviation over seeds; \emph{Gain} is the improvement of Indist.\ over $A_0$ in percentage points, and bold marks the best mean in each row. %
}
\label{tab:main-results}
\begin{tabular}{@{}lrrrrrr@{}}
\toprule
& \multicolumn{3}{c}{\textbf{Not correctness-based}}
& \multicolumn{3}{c}{\textbf{Correctness-based}}\\
\cmidrule(lr){2-4}\cmidrule(lr){5-7}
\textbf{Dataset} & $\boldsymbol{A_0}$ & \textbf{Indist.} & \textbf{Gain} &
\textbf{Quality} & \textbf{SFT} & \textbf{Oracle}\\
\midrule
CommonsenseQA & \singlescore{57.9} & \identitymean{70.3}{3.6} & \identitygain{12.4} & \meanscore{\textbf{71.7}}{\textbf{1.1}} & \meanscore{71.3}{0.6} & \meanscore{69.0}{1.2}\\
GSM8K & \singlescore{55.7} & \identitymean{68.8}{1.6} & \identitygain{13.0} & \meanscore{68.9}{0.7} & \meanscore{68.6}{0.9} & \meanscore{\textbf{71.9}}{\textbf{0.6}}\\
\addlinespace[2pt]
KodCode & \singlescore{38.8} & \identitymean{44.3}{1.8} & \identitygain{5.4} & \meanscore{45.7}{1.5} & \meanscore{41.6}{0.7} & \meanscore{\textbf{47.0}}{\textbf{3.6}}\\
Spider & \singlescore{47.9} & \identitymean{59.8}{4.0} & \identitygain{11.9} & \meanscore{40.6}{35.2} & \meanscore{\textbf{61.6}}{\textbf{1.0}} & \meanscore{59.0}{4.0}\\
\bottomrule
\end{tabular}
\end{table}

\subsection{Comparison to other RL Baselines and Seed-Dependency}
The main thesis of this work is that our restriction of the Turing Test can be used for RL training, improving a student's ability to fool the distinguisher \emph{and in the same process} its general capability on a verifiable task. Our goal is not to beat other RL methods; as an additional empirical exploration, we compare with several baselines as described in Section~\ref{sec:experimental-setup}. 

As Table \ref{tab:main-results} shows, overall the Turing-Test-based method is broadly competitive. It is within 3.1 percentage points of the best method in each domain. Training initialization (the random seed) can have a strong effect; Table~\ref{tab:per-seed-results} in the appendix presents the final held-out test accuracy for each seed. Interestingly, there we see that in 4/12 runs (and in each domain other than GSM8K), indistinguishability resulted in the highest accuracy, more often than Quality and SFT; verifier-based RL (Oracle), however, was the best in 6/12 runs. Appendix~\ref{sec:outlying-runs} explains the two outlying behaviors: the Spider Quality run that scores 0\% and the collapse of two Oracle runs on CommonsenseQA to four-token answers.

The Turing-Test-based training method is fundamentally different from the compared baselines, which are all \emph{correctness-based} in one way or another. RL with verifier outputs (Oracle) does this directly. RL with a teacher's correctness score approximates this. SFT provides teacher-generated answers (which are often correct). It is interesting, we believe, that a method rooted only in \emph{indistinguishability} of output provenance can achieve effects similar to those of optimizing correctness-based objectives.

\subsection{A Closed-Weight Teacher without Log-Probabilities}
\label{sec:closed-teacher}

So far, our reward uses the teacher's log-probabilities of the two Turing-Test answers. Most proprietary and frontier models, however, do not expose them -- they only return text. Does our method need them? The Turing Test itself does not: the judge only gives a \emph{verdict}. We therefore replace DeepSeek V4 Flash with the closed-weight GPT-5 Mini and simply ask it the attribution question $K=10$ times per response. If it claims the response as its own $m$ times, the reward is $\log\big((m+1)/(K+2)\big)$, a smoothed estimate of $\log\pown$ (\textsc{MirrorVote}, Appendix~\ref{app:sampled-judgments}). Everything else is unchanged: the same actor $A_0$, the same 1,000 training questions and GRPO setup, and the same held-out evaluation as for DeepSeek. We ran a single seed per domain, so these results are preliminary.

In three of the four domains, the picture is strikingly similar. The actor learns to fool GPT-5 Mini: in every domain, the teacher's estimate of $\pown$ rises during training (Table~\ref{tab:closed-teacher}). \textbf{Again, we find that fooling the teacher improves accuracy}: by 13.3 (CommonsenseQA), 6.9 (KodCode), and 15.6 (Spider) percentage points over $A_0$ (Table~\ref{tab:closed-teacher}) -- on Spider, twice the gain of the DeepSeek run with the same seed. GSM8K is the exception, and an instructive one. GPT-5 Mini already attributed 85\% of the actor's answers to itself at the start of training, leaving little gap for the reward to close. Meanwhile, the actor drifted into writing its final answer in \LaTeX{} math, a format the evaluator does not accept: the same format drift as in the Spider Quality run of seed 23 (Appendix~\ref{sec:outlying-runs}).

\begin{table}[t]
\centering
\small
\setlength{\tabcolsep}{5pt}
\caption{Training with a \emph{closed-weight} teacher, single seed (23). $\pown$ is the mean over the training rollouts of the first and of the last 100 updates; for GPT-5 Mini it is estimated as the share $m/K$ of the ten sampled verdicts that claim the response. Acc.\ is held-out accuracy at the terminal checkpoint, and \emph{Gain} is the GPT-5 Mini run over $A_0$. The DeepSeek columns are the seed-23 run of Table~\ref{tab:per-seed-results}.}
\label{tab:closed-teacher}
\begin{tabular}{@{}lrcrcrr@{}}
\toprule
& & \multicolumn{2}{c}{\textbf{DeepSeek V4 Flash} (log-probs)} & \multicolumn{3}{c}{\textbf{GPT-5 Mini} (closed-weight, $K=10$)}\\
\cmidrule(lr){3-4}\cmidrule(lr){5-7}
\textbf{Dataset} & $\boldsymbol{A_0}$ & $\pown$: start $\to$ end & Acc. & $\pown$: start $\to$ end & Acc. & \textbf{Gain}\\
\midrule
CommonsenseQA & 57.9\% & 17.4\% $\to$ 23.0\% & 72.8\% & 69.9\% $\to$ 78.8\% & 71.2\% & $+13.3$\\
GSM8K & 55.7\% & 39.3\% $\to$ 46.0\% & 67.4\% & 84.9\% $\to$ 88.0\% & 33.1\% & $-22.7$\\
KodCode & 38.8\% & 41.6\% $\to$ 50.5\% & 46.0\% & 67.2\% $\to$ 76.9\% & 45.7\% & $+6.9$\\
Spider & 47.9\% & 69.8\% $\to$ 77.2\% & 55.2\% & 68.7\% $\to$ 77.0\% & 63.4\% & $+15.6$\\
\bottomrule
\end{tabular}
\end{table}

\section{Why does Indistinguishability Improve Accuracy?}
\label{sec:reward-characterization}

We have shown experimental evidence that training with the reward of \emph{indistinguishability}, i.e. fooling our restricted Turing Test, improves accuracy. In this section, we ask \emph{why}. Percentages are always presented in the order \textbf{CommonsenseQA, GSM8K, KodCode, Spider}.

The highest-level answer to this question is that the probability of fooling the distinguisher is highly correlated with correctness. In the next subsection, we perform interventions that provide evidence that beyond correlation, the distinguisher is truly rewarding, i.e. self-attributing, based on correctness over style/formatting.

\begin{figure}[t]
\centering
\includegraphics[width=\textwidth]{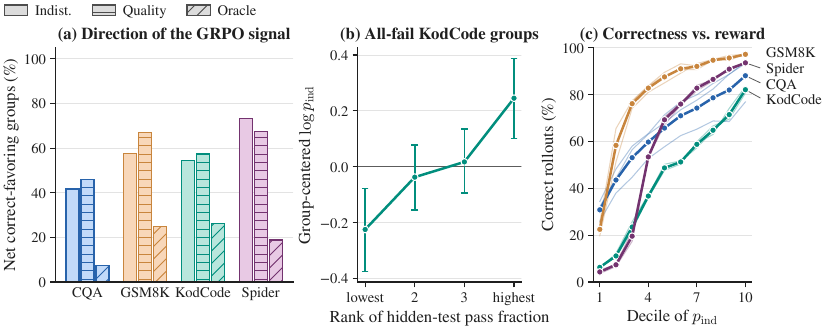}
\caption{The GRPO signal of the indistinguishability reward. \textbf{(a)} Share of the 2,000
four-response groups of a run pulled toward a correct response net of those pulled away: the
share of groups whose rewards are not all equal, times $2d-1$, where $d$ is the share of mixed
groups in which the correct responses receive the higher mean reward. Indist.\ and Oracle average
the three seeds; Quality averages the three seeds for the first factor and takes $d$ from the
Indist.\ rollouts of the three seeds, scored by the Quality judge. For Oracle $d=1$. \textbf{(b)} KodCode
groups of the seed-23 Indist.\ run in which all four programs fail the hidden tests: $\log\pown$
minus the mean of its group (not rescaled) by the within-group rank of the fraction of
hidden-test assertions passed (259 groups, 95\% bootstrap intervals). \textbf{(c)} Share of
correct Indist.\ training rollouts by decile of $\pown$, 800 rollouts per bin and seed; thin
lines are single seeds.}
\label{fig:net-useful-groups}
\end{figure}

In our training regime, $A_\theta$ produces eight answers to each training question (four in each epoch). In the seed-23 run, among ``mixed'' questions -- that is, those with at least one correct and one wrong answer -- correct responses have a higher $\log\pown$ than incorrect ones in 71.4\%, 83.0\%, 78.1\%, and 89.5\% (Appendix Figure~\ref{fig:within-prompt-gap}). The attribution score is also strongly correlated with the same teacher's judgment of correctness (mean Spearman $\rho$ over three seeds from .57 to .89; Appendix Figure~\ref{fig:judge-vs-reward}). This is visualized in Figure \ref{fig:reward-correctness}, showing the distributions of $\pown$ for incorrect and correct student answers. A very high reward is an \emph{especially} good signal of correctness, as seen in Figure \ref{fig:net-useful-groups}c: sorted into 10 bins (equal in the number of answers), the share of correct answers increases from 31\% to 88\%, 23\% to 97\%, 6\% to 82\%, and 4\% to 94\%.

What does the reward do when every response in a group is incorrect? It would be nice if $\log\pown$ rewarded \emph{partial} progress, and we find evidence that this may indeed be the case. In one of our domains, KodCode, the verifier provided with the dataset actually checks a list of assertions; $V$ outputs $1$ if all assertions pass. While some of those assertions may be more semantically important than others, as a simple baseline we take all 951 (of 2000) groups during training where all 4 programs output by $A_\theta$ fail, and re-execute the verifier's assertions \emph{separately}. Of the 259 groups with different \emph{fractions} of passed-assertions, we rank the programs within each group by this fraction and find that the reward follows it (Figure \ref{fig:net-useful-groups}b). For any two wrong programs, the program that passes more assertions has the higher reward in 61.8\% [57.8\%, 66.0\%] of pairs. This shows that even when all responses in a group are strictly incorrect, at least part of the reward signal aligns with progress towards a correct solution. This in part explains the phenomenon that $\log\pown$ can do better as a reward than the output of $V$ itself; when all solutions are wrong, the verifier-reward assigns $0$ to each answer, giving no RL signal (a \emph{sparse} reward). More generally, $\log\pown$ leaves at most 1.1\% of the groups of a run with four equal rewards (means over seeds), whereas the verifier reward leaves 74\% to 93\% of them (Figure~\ref{fig:net-useful-groups}a combines this with the direction of each reward).

\begin{figure}[t]
\centering
\includegraphics[width=\textwidth]{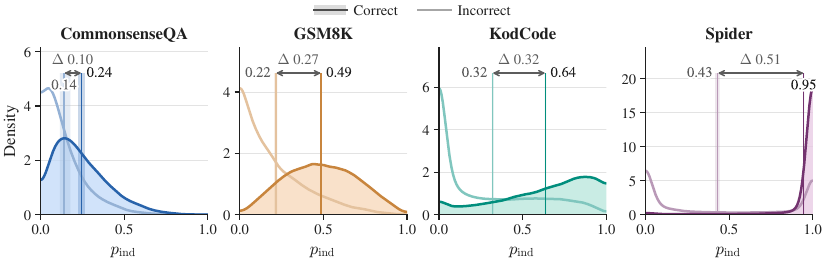}
\caption{The indistinguishability reward and verifier correctness on the Indist.\ training
rollouts: kernel density estimates of $\pown$ for correct and incorrect responses,
pooling the 8,000 rollouts of each of the three seeds (boundary-reflected, bandwidth 0.035);
vertical lines mark the conditional means and their bands span the seeds. The optimized reward is $\log\pown$; the probability scale is shown for
interpretation. KodCode labels come from re-executing the hidden tests.}
\label{fig:reward-correctness}
\end{figure}

\subsection{Counterfactual Interventions}
\label{sec:counterfactual-interventions}

For each domain, we sample 100 random \emph{correct} responses of $A_\theta$ during training. For each, we generate three \emph{variants}. The first variant changes the answer as little as possible while rendering it incorrect. A fixed rule, not a language model, makes this edit: it replaces the final answer letter (CommonsenseQA) or number (GSM8K), flips one operator or constant of the program (KodCode, e.g.\ \texttt{==} to \texttt{!=}) or one SQL component (Spider, e.g.\ \texttt{ASC} to \texttt{DESC}), and keeps the first edit that the verifier confirms as valid but incorrect (Appendix~\ref{app:counterfactual-examples}). The second variant prepends a comment-like prefix (domain-dependent) that preserves correctness. The third variant makes both edits. We then compute $\log\pown$ on all 3 variants and the original answer. 

\begin{table}[t]
\centering
\small
\setlength{\tabcolsep}{4pt}
\caption{Effects of the controlled interventions on $\log\pown$ as described in Section \ref{sec:counterfactual-interventions}. All 95\% task-instance bootstrap intervals for
semantic \( - \) style are above zero.}
\label{tab:causal-style}
\begin{tabular}{@{}lrrrr@{}}
\toprule
\textbf{Domain} & \textbf{Semantic effect} & \textbf{Style magnitude} &
\textbf{Semantic \( - \) style} & \textbf{Correct preferred}\\
\midrule
CommonsenseQA & 1.247 & 0.667 & 0.579 & 91.5\%\\
GSM8K         & 2.508 & 0.776 & 1.732 & 98.0\%\\
KodCode       & 2.813 & 0.604 & 2.210 & 95.0\%\\
Spider        & 4.277 & 0.565 & 3.712 & 100.0\%\\
\bottomrule
\end{tabular}
\end{table}

Table~\ref{tab:causal-style} reports the results of the intervention study. Denote the original response as $y_0$, the incorrect version $z_0$, and the prefixed versions $y_1$ and $z_1$ respectively. \emph{Semantic effect} is the average effect of changing just correctness -- that is, $\frac{1}{2}\sum_{i=0}^1 \big(\log\pown(y_i)-\log\pown(z_i)\big)$. \emph{Style magnitude} is the same for adding the prefix, in absolute value -- $\frac{1}{2}\sum_{w\in\{y,z\}} \left|\log\pown(w_0)-\log\pown(w_1)\right|$. Corrupting correctness lowers the reward in every domain and more than the prefix, with the correct version receiving the highest reward in 91.5\% to 100\% of comparisons.

These are but initial interventions studying ``how'' distinguishers play our Turing-Test-based game. In particular, the way we make responses incorrect in CommonsenseQA and GSM8K changes only the final answer, i.e. resulting in a response $z$ where the final answer contradicts its explanation/derivation; the distinguisher could be rejecting this contradiction as its own doing. The prefix intervention is a single, fixed line. Future work could try more sophisticated variants of actor outputs to better isolate these effects.

\section{Discussion and Future Directions}
\label{sec:iar-discussion}
To our knowledge, this is the first work to use a domain-restricted Turing Test, where imitation is of a stronger model, as a training objective. We show that on multiple domains, learning to fool the distinguisher \emph{implicitly} imparts gains in capability on that domain. Though we used the word ``teacher'' at times for the distinguisher, this teacher never generates answers nor correctness judgments; the objective is rooted only in \emph{indistinguishability}. Our main study uses but one student, two teacher models, and small domains (1,000 training questions); a central open question is how this method scales (to more models, and larger/more varied domains; see Section~\ref{sec:limitations} for a detailed description of limitations). Finally, our work initiates the study of Turing-test-based training on verifiable domains (i.e. beyond human-simulation), and as such we focused only on training the \emph{actor} -- the holy grail of Turing-test-based RL would be a GAN-like paradigm where both actor \emph{and} distinguisher are trained adversarially against one another. As a simple first next step, what if we let the distinguisher choose the next question from the verifiable domain?

\section{Limitations}
\label{sec:limitations}

We use one student (Qwen2.5-1.5B), one teacher in the main experiments (plus a closed-weight teacher in Section~\ref{sec:closed-teacher}), and 1,000 training questions per domain. The small training set is deliberate: keeping the data fixed and controlled let us spend our compute on three training seeds for every condition, which is what separates the effect of the reward from the stochasticity of RL training. Even so, outside the failed Spider Quality run, between-seed standard deviations reach 4 percentage points, comparable to most of the gaps between methods, so we read the comparison with the baselines as evidence that the method is \emph{competitive} rather than as a ranking. The closed-weight experiment of Section~\ref{sec:closed-teacher} is a proof of concept, with a single seed per domain. Given how regular the effect is with DeepSeek -- every one of the twelve runs improves over $A_0$ -- we expect it to hold across seeds, but this remains to be tested. Two further caveats concern the comparison and the teacher: SFT is not matched to the GRPO conditions in teacher budget, and the teacher's pre-training data are unknown and may include the public benchmarks we use. The latter can affect the teacher's own competence and the initial capability gap $\gamma_\mu$, though not the separation between training questions and held-out evaluation. Finally, because the distinguisher judges a response as a whole, nothing in the reward enforces the output format an evaluator expects; in a few runs the student drifted toward a format the evaluator rejects (Appendix~\ref{sec:outlying-runs}), which a light format check in the reward could prevent. We did not ablate the group size $G$, the KL coefficient, or the size of the training set.

\newcommand{\limitationsinmain}{}

\begingroup
\small
\bibliography{references}
\bibliographystyle{plainnat}
\endgroup

\clearpage
\appendix
\ifdefined\limitationsinmain\else\fi

\section{Datasets}
\label{app:data}

CommonsenseQA \citep{talmor2019commonsenseqa} asks five-way multiple-choice commonsense
questions; GSM8K \citep{cobbe2021gsm8k} asks grade-school math word problems with a numeric
answer; KodCode \citep{xu2025kodcode} asks for Python functions checked by hidden unit tests;
Spider \citep{yu2018spider} asks for SQL queries over unseen database schemas.

\begin{table}[H]
\centering
\small
\setlength{\tabcolsep}{4pt}
\caption{Training prompts and held-out items per domain.}
\label{tab:datasets}
\begin{tabularx}{\textwidth}{@{}lYrr@{}}
\toprule
\textbf{Dataset} & \textbf{Output and check} & \textbf{Train} & \textbf{Held-out}\\
\midrule
CommonsenseQA & Explanation + A--E; exact answer key & 1,000 & 1,221\\
GSM8K & Reasoning + number; exact normalized value & 1,000 & 1,319\\
KodCode & Python program; hidden-test execution & 1,000 & 992\\
Spider & SQL query; execution equivalence & 1,000 & 1,034\\
\bottomrule
\end{tabularx}
\end{table}

\section{Teacher Selection}
\label{app:teacher-selection}

The actor is Qwen2.5-1.5B-Instruct \citep{yang2025qwen25}. We select DeepSeek V4 Flash as teacher
and distinguisher because it returns single-token judgments with top log-probabilities and
separates its own responses from those of $A_0$ in every domain. The teacher is the April 2026
release of DeepSeek V4 Flash \citep{deepseekai2026v4} (open weights
\texttt{deepseek-ai/\allowbreak DeepSeek-V4-Flash}), queried through OpenRouter as
\texttt{deepseek/\allowbreak deepseek-v4-flash} (canonical slug
\texttt{deepseek/\allowbreak deepseek-v4-flash-20260423}), pinned to the StreamLake provider without
fallback and with reasoning disabled. Responses from any other provider are rejected, and the reward
and teacher-corpus calls also verify the reported model. For each domain, DeepSeek and
$A_0$ each answer 100 training prompts, and a judge instantiated from the same model scores the
200 responses in anonymized order with the probability that each is its own.
Table~\ref{tab:judge-gates} reports the source-attribution AUC and the paired gap in $\pown$,
an estimate $\hat\delta_\mu$ of the distinguishability gap before training. The probe prompts
come from the training data, not the held-out test, and the screen precedes any policy update.
The table also reports the capability gap $\gamma_\mu$: the teacher answers the held-out test
items with the $A_0$ prompts, greedy decoding, and at most 768 tokens, and is scored by the same
evaluators as the actor. The teacher is stronger than $A_0$ by 26 to 41 points in every domain.

\begin{table}[H]
\centering
\small
\setlength{\tabcolsep}{5pt}
\caption{Teacher selection. \emph{Capability}: held-out accuracy of the teacher and of $A_0$
under the same evaluator, and their difference $\gamma_\mu$ (percentage points).
\emph{Distinguishability}: AUC ranks 100 teacher responses against 100 $A_0$ responses; the
gap $\hat\delta_\mu$ in $\pown$ is paired over training samples.}
\label{tab:judge-gates}
\begin{tabular}{@{}lrrrcrrrr@{}}
\toprule
& \multicolumn{3}{c}{\textbf{Capability (held-out accuracy)}} & &
\multicolumn{4}{c}{\textbf{Distinguishability (training prompts)}}\\
\cmidrule(lr){2-4}\cmidrule(lr){6-9}
\textbf{Domain} & \textbf{Teacher} & $\boldsymbol{A_0}$ & $\boldsymbol{\gamma_\mu}$ & &
\textbf{AUC} & $\boldsymbol{\E_D\pown}$ & $\boldsymbol{\E_{A_0}\pown}$ & $\boldsymbol{\hat\delta_\mu}$\\
\midrule
CommonsenseQA & 83.7\% & 57.9\% & 25.8 & & .821 & .347 & .157 & .190\\
GSM8K & 96.3\% & 55.7\% & 40.6 & & .645 & .431 & .318 & .113\\
KodCode & 67.1\% & 38.8\% & 28.3 & & .754 & .630 & .351 & .279\\
Spider & 78.2\% & 47.9\% & 30.3 & & .730 & .963 & .638 & .325\\
\bottomrule
\end{tabular}
\end{table}

\section{Training and Generation Details}
\label{app:training}

\begin{table}[H]
\centering
\small
\caption{Experiment parameter configuration.}
\label{tab:training-configuration}
\begin{tabularx}{\textwidth}{@{}lY@{}}
\toprule
\textbf{Setting} & \textbf{Value}\\
\midrule
Actor & Qwen2.5-1.5B-Instruct\\
Seeds & 23, 42, 7\\
Training samples & 1,000\\
Epochs & 2\\
GRPO group size & 4\\
Rewarded rollouts per run & 8,000\\
Generation batch size & 8\\
Training batch size & 4\\
Maximum prompt / completion tokens & 4,096 / 768\\
Actor sampling & temperature .6, top-$p$ 1.0\\
Held-out decoding & greedy, at most 768 new tokens\\
Judge sampling & temperature 0, one output token, top-5 log-probabilities\\
Optimizer schedule & learning rate $3\times10^{-5}$, constant, no warm-up\\
Objective & GRPO, fixed-length loss normalization, group scaling, $\beta=.003$\\
Truncated responses & omitted from the loss\\
LoRA & rank 32, $\alpha=32$, dropout .05; attention + MLP projections\\
Training steps & 2,000 (SFT: 500)\\
\bottomrule
\end{tabularx}
\end{table}

\subsection{Implemented GRPO Objective}
\label{app:optimization-objective}

Advantages are the rewards standardized within each prompt's group of $G=4$ responses
(group mean subtracted, divided by the group standard deviation). A generation batch holds two prompts, i.e.\ eight responses,
which are shuffled and split into two optimizer minibatches of $B=4$; each response is used
once. For one minibatch the loss is
\begin{equation}
\mathcal L_{\mathrm{GRPO}}(\theta)=
-\frac{1}{B L_{\max}}\sum_{i=1}^{B}m_i\sum_{t=1}^{|y_i|}
\left[\min\!\left\{
\rho_{i,t}(\theta)\widehat A_i,
\operatorname{clip}_{1-\epsilon}^{1+\epsilon}
\!\left(\rho_{i,t}(\theta)\right)\widehat A_i
\right\}-\beta d_{i,t}(\theta)\right],
\label{eq:drgrpo-loss}
\end{equation}
with $\rho_{i,t}=\pi_\theta(y_{i,t}\mid x_i,y_{i,<t})/\pi_{\mathrm{old}}(y_{i,t}\mid x_i,y_{i,<t})$
and $d_{i,t}=\pi_{\mathrm{ref}}/\pi_\theta-\log(\pi_{\mathrm{ref}}/\pi_\theta)-1$ evaluated on the
same token, $\pi_{\mathrm{ref}}=A_0$, $m_i=0$ for truncated responses, $L_{\max}=768$,
$\epsilon=.2$, and $\beta=.003$; token probabilities use temperature $T=.6$. Since
$\pi_{\mathrm{old}}$ generated the batch, $\rho_{i,t}$ departs from one only in its second
minibatch. The fixed denominator $BL_{\max}$ is the Dr.~GRPO length correction
\citep{liu2025understanding}; group standardization is retained.

\subsection{Choice of Reward Form}
\label{app:reward-forms}

The attribution judgment can be turned into a reward in several ways, and under GRPO only the
ordering and spacing of the rewards within a group matter. Before the main experiments we compared
four forms in a short pilot: $\pown$, $\log\pown$, the logit $\ell_1-\ell_0$, and the binary verdict
$\mathbf 1\{\pown>\tfrac12\}$. Each arm trains the same actor against the same teacher, with the
learning rate, KL coefficient, and group size of Table~\ref{tab:training-configuration}, for 374
updates on GSM8K and KodCode
(seed 23), and is scored on 300 validation items (299 for KodCode), disjoint from the held-out test, after 187 and
374 updates. The selection rule, fixed in advance, maximizes the validation gain over $A_0$
averaged over the two domains. $\log\pown$ gives the largest gain in both domains
(Table~\ref{tab:reward-forms}) and is the form used throughout the paper. The binary verdict,
which keeps the decision but discards the teacher's confidence, gives the smallest mean gain.

\begin{table}[H]
\centering
\small
\caption{Reward-form pilot: validation accuracy gain over $A_0$ (percentage points; 300 validation
items for GSM8K and 299 for KodCode; best of the checkpoints after 187 and 374 updates, one seed). $A_0$ scores
60.7\% on GSM8K and 45.8\% on KodCode. Bootstrap standard errors of the mean gain are 1.8 to
2.0 points.}
\label{tab:reward-forms}
\begin{tabular}{@{}llrrr@{}}
\toprule
\textbf{Form} & \textbf{Reward} & \textbf{GSM8K} & \textbf{KodCode} & \textbf{Mean}\\
\midrule
Log probability (ours) & $\log\pown$ & $+21.0$ & $+5.7$ & $+13.3$\\
Logit & $\ell_1-\ell_0$ & $+15.0$ & $+3.7$ & $+9.3$\\
Probability & $\pown$ & $+17.7$ & $+0.3$ & $+9.0$\\
Binary verdict & $\mathbf 1\{\pown>\tfrac12\}$ & $+11.7$ & $+3.0$ & $+7.3$\\
\bottomrule
\end{tabular}
\end{table}

\section{Held-out Accuracy per Seed}
\label{app:per-seed}

Table~\ref{tab:per-seed-results} lists the held-out accuracy of each run separately.

\begin{table}[H]
\centering
\small
\setlength{\tabcolsep}{6pt}
\caption{Held-out accuracy of every training seed at the terminal checkpoint, for the runs
averaged in Table~\ref{tab:main-results}. $A_0$ is frozen and has no seed. Bold marks the best score
of each row.}
\label{tab:per-seed-results}
\begin{tabular}{@{}llrrrrr@{}}
\toprule
& & \multicolumn{2}{c}{\textbf{Not correctness-based}}
& \multicolumn{3}{c}{\textbf{Correctness-based}}\\
\cmidrule(lr){3-4}\cmidrule(lr){5-7}
\textbf{Dataset} & \textbf{Seed} & $\boldsymbol{A_0}$ & \textbf{Indist.} &
\textbf{Quality} & \textbf{SFT} & \textbf{Oracle}\\
\midrule
\multirow{3}{*}{CommonsenseQA} & 23 & \multirow{3}{*}{57.9} & \textbf{72.8} & 70.8 & 71.6 & 67.9\\
 & 42 & & 66.3 & \textbf{72.9} & 70.6 & 70.3\\
 & 7  & & \textbf{71.9} & 71.4 & 71.7 & 68.9\\
\addlinespace[2pt]
\multirow{3}{*}{GSM8K} & 23 & \multirow{3}{*}{55.7} & 67.4 & 69.7 & 68.1 & \textbf{71.3}\\
 & 42 & & 68.3 & 68.4 & 69.6 & \textbf{71.9}\\
 & 7  & & 70.6 & 68.6 & 68.2 & \textbf{72.5}\\
\addlinespace[2pt]
\multirow{3}{*}{KodCode} & 23 & \multirow{3}{*}{38.8} & \textbf{46.0} & 45.4 & 42.2 & 43.2\\
 & 42 & & 44.5 & 47.4 & 41.7 & \textbf{50.4}\\
 & 7  & & 42.3 & 44.4 & 40.8 & \textbf{47.5}\\
\addlinespace[2pt]
\multirow{3}{*}{Spider} & 23 & \multirow{3}{*}{47.9} & 55.2 & 0.0 & \textbf{60.7} & 56.6\\
 & 42 & & 62.4 & 61.5 & 62.8 & \textbf{63.5}\\
 & 7  & & \textbf{61.7} & 60.3 & 61.3 & 56.8\\
\bottomrule
\end{tabular}
\end{table}

\section{Training Diagnostics}
\label{app:training-dynamics}

Figures~\ref{fig:training-loss} and~\ref{fig:training-diagnostics} show the optimization
traces of all GRPO runs (three seeds per condition): faint lines
are the logged values of each seed and bold lines a 51-step centered average of their mean. All
runs are numerically stable: the loss oscillates around zero, clipping and gradient norms stay
small, and a few isolated gradient spikes, mostly in Quality and Oracle runs, are absorbed by the
next updates. The one marked trend is the entropy of the CommonsenseQA Oracle runs, which drops
within the first updates as two of the three seeds collapse to four-token answers
(Appendix~\ref{sec:outlying-runs}).

\begin{figure}[H]
\centering
\includegraphics[width=0.94\textwidth]{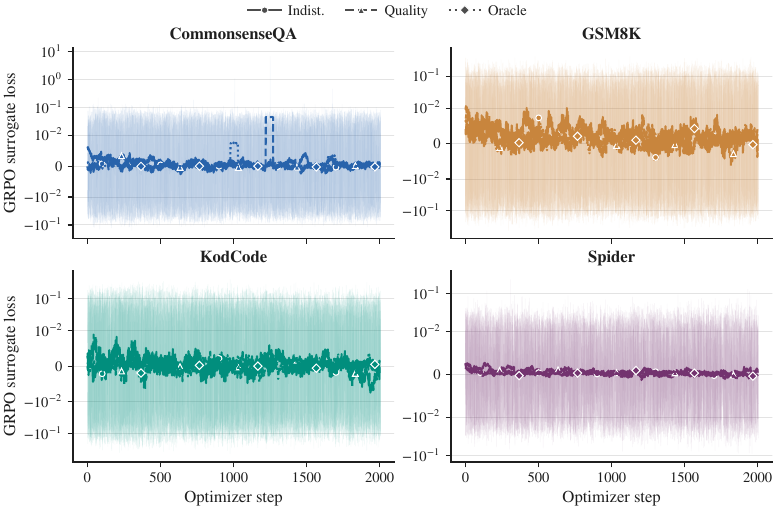}
\caption{GRPO surrogate loss of all GRPO runs (symmetric-log scale).}
\label{fig:training-loss}
\end{figure}

\begin{figure}[H]
\centering
\includegraphics[width=\textwidth,height=.78\textheight,keepaspectratio]{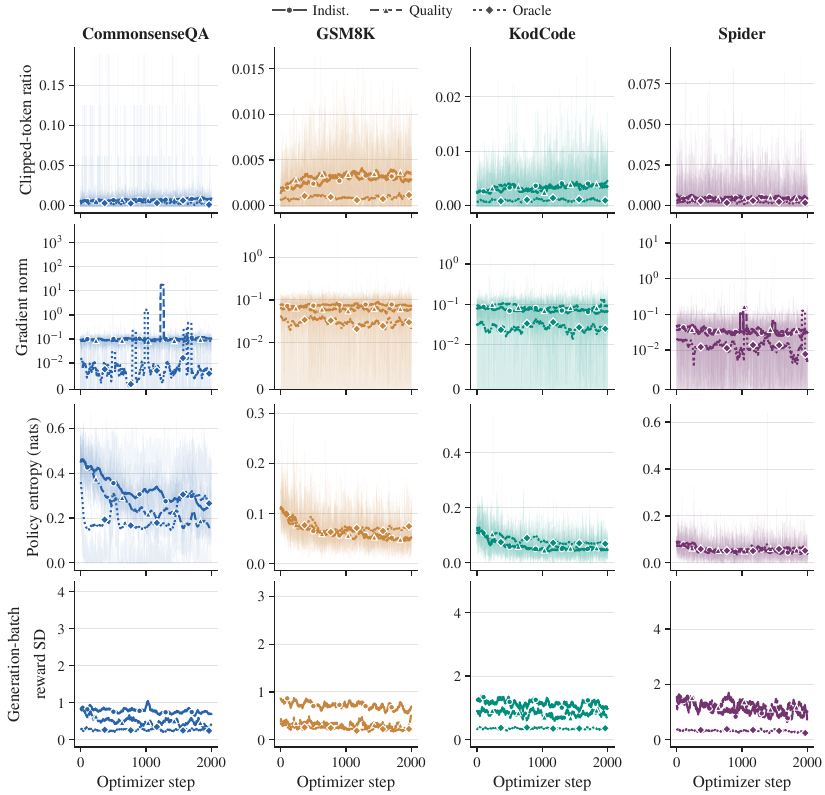}
\caption{Clipped-token ratio, pre-clipping gradient norm, token entropy, and generation-batch
reward standard deviation (rows) of all GRPO runs, by dataset (columns).}
\label{fig:training-diagnostics}
\end{figure}

\section{Counterfactual Intervention Protocol and Examples}
\label{app:counterfactual-examples}

For each domain, we select 100 correct, format-valid training responses and construct
four variants by crossing a local semantic corruption with a fixed formatting prefix:
\[
\begin{aligned}
C_0&=\text{correct original}, & C_1&=\text{correct with prefix},\\
I_0&=\text{incorrect original}, & I_1&=\text{incorrect with prefix}.
\end{aligned}
\]
Semantic edits preserve as much text as possible. The deterministic verifier must
confirm that the edited response remains format-valid and verifiable but becomes
incorrect: a parseable query or runnable program can still fail correctness checks.
We change the final choice in CommonsenseQA, the reported number in GSM8K, a local
program token or operator in KodCode, or a local SQL component in Spider. Examples
follow below.

The formatting intervention prepends the fixed, domain-specific label below without
modifying the remainder of the response.
\begin{center}
\small
\begin{tabular}{@{}ll@{}}
\toprule
\textbf{Dataset} & \textbf{Prefix added to the response}\\
\midrule
CommonsenseQA, GSM8K & \texttt{\#\#\# Candidate response}\\
KodCode & \texttt{\# Candidate response}\\
Spider & \texttt{-{}- Candidate response}\\
\bottomrule
\end{tabular}
\end{center}
The
verifier confirms that the prefix preserves the correctness label: \(C_0\) and \(C_1\)
must both remain correct, while \(I_0\) and \(I_1\) must both remain valid and incorrect. Candidate order is
hidden from the discriminator and deterministically shuffled before scoring.

Section~\ref{sec:counterfactual-interventions} defines the semantic effect, the mean of
$\log\pown(y_i)-\log\pown(z_i)$ over the two prefix conditions, and the style magnitude, the
mean of $|\log\pown(V_0)-\log\pown(V_1)|$ over the correct and incorrect versions. A positive
semantic effect favors correct responses; the style magnitude counts either direction of
prefix-induced change. Their difference compares the two effects.

The incorrect version is produced by a fixed rule. Each rule lists
candidate edits in a fixed order and keeps the first one the verifier confirms as valid but
incorrect. CommonsenseQA replaces the final answer letter with the first wrong letter (100 of
100 responses). GSM8K replaces the final number $n$ with $n+1$ (100 of 100). KodCode flips one
Python token, trying \texttt{==}/\texttt{!=}, \texttt{<}/\texttt{>}, \texttt{<=}/\texttt{>=},
\texttt{+}/\texttt{-}, \texttt{*}$\to$\texttt{+}, \texttt{and}/\texttt{or}, and
\texttt{True}/\texttt{False} in order of appearance (96 of 100); when none makes the program
fail, it inserts \texttt{raise AssertionError} at the start of the function (4 of 100). Spider
flips one SQL component, trying \texttt{ASC}/\texttt{DESC}, \texttt{AND}/\texttt{OR}, a
comparison operator, or \texttt{MAX}/\texttt{MIN} (69 of 100); otherwise it appends
\texttt{LIMIT 0} (30 of 100) or replaces the query by \texttt{SELECT 0} (1 of 100). These
Spider fallbacks change the response more than a local edit.

\subsection{Local Semantic Edits}

Unchanged task and response text is omitted; every edit is checked by the domain verifier.

\paragraph{CommonsenseQA.}
\begin{promptcard}\small
Keep the explanation unchanged; change the final choice only.\\
Correct: \texttt{Answer: B}\\
Incorrect: \texttt{Answer: A}
\end{promptcard}

\paragraph{GSM8K.}
\begin{promptcard}\small
Keep the derivation yielding 796 unchanged; change the reported answer only.\\
Correct: \texttt{Answer: 796}\\
Incorrect: \texttt{Answer: 797}
\end{promptcard}

\paragraph{KodCode.}
\begin{promptcard}\small
Keep the rest of the program unchanged; change one operator.\\
Correct: \texttt{return a + b}\\
Incorrect: \texttt{return a - b}
\end{promptcard}

\paragraph{Spider.}
\begin{promptcard}\small
Keep the SQL statement unchanged except for a result-altering clause.\\
Correct: \texttt{SELECT COUNT(*) FROM Album;}\\
Incorrect: \texttt{SELECT COUNT(*) FROM Album LIMIT 0;}
\end{promptcard}

\section{Prompts}
\label{app:baseline-prompts}

Every prompt is a single user message with no system prompt; \texttt{[TASK]} is the actor's
task prompt and \texttt{[CANDIDATE]} the actor's response.

\Needspace{20\baselineskip}
\paragraph{Quality judge.} Asks whether the candidate correctly solves the task.
\begin{promptcard}\small\ttfamily\raggedright
You are a strict task-solution correctness evaluator.

Judge only whether the candidate fully and correctly solves the supplied task. Do not judge
authorship, style, similarity to another model, or whether you would have written it. You are
not given a reference answer. Return exactly one character: 1 for a fully correct solution,
or 0 otherwise.

Task:\\{}
[TASK]

Candidate solution:\\{}
[CANDIDATE]

Label (0 or 1):
\end{promptcard}
The first-token log-probabilities of the two labels give the Quality reward, and Oracle uses
the verifier bit:
\begin{equation}
q=\frac{e^{\ell_1}}{e^{\ell_0}+e^{\ell_1}},\quad
r_{\mathrm{quality}}=\log\operatorname{clip}(q,10^{-6},1-10^{-6});
\label{eq:quality-reward}
\end{equation}
\begin{equation}
r_{\mathrm{oracle}}(x,y)=V(x,y)\in\{0,1\}.
\label{eq:oracle-reward}
\end{equation}

\paragraph{Attribution prompts.} One frozen prompt per domain asks the teacher whether it
wrote the candidate; the Spider prompt asks whether it would plausibly have produced it.

\Needspace{14\baselineskip}
\paragraph{CommonsenseQA attribution prompt.}
\begin{promptcard}\footnotesize\ttfamily\raggedright
You will be shown a commonsense multiple-choice question and one anonymized candidate response.

The response may have been written by you, or it may have been written by another\\
model answering the same question.

Your objective is not to judge whether the response is better in general.\\
Your objective is to decide whether this response was written by you, based on\\
your own reasoning choices, reasoning strategy, use of commonsense knowledge,\\
explanation structure, treatment of alternatives, and level of detail.

Task:\\{}
[TASK]

Candidate response:\\{}
[CANDIDATE]

Was this response written by you?

Your entire response must be exactly one ASCII character: 0 or 1.\\
Do not output a word, prefix, whitespace, punctuation, or explanation.

Return exactly one token:\\
1 = yes, this was written by me\\
0 = no, this was not written by me

Output:
\end{promptcard}

\Needspace{14\baselineskip}
\paragraph{GSM8K attribution prompt.}
\begin{promptcard}\footnotesize\ttfamily\raggedright
You will be shown a grade-school mathematics problem and one anonymized\\
candidate solution.

The solution may have been written by you, or it may have been written by\\
another model solving the same problem.

Your objective is not to judge whether the solution is better in general.\\
Your objective is to decide whether this solution was written by you, based on\\
your own reasoning choices, problem-solving strategy, arithmetic decomposition,\\
explanation structure, calculation choices, and level of detail.

Task:\\{}
[TASK]

Candidate solution:\\{}
[CANDIDATE]

Was this solution written by you?

Return exactly one token:\\
1 = yes, this was written by me\\
0 = no, this was not written by me

Answer:
\end{promptcard}

\Needspace{14\baselineskip}
\paragraph{KodCode attribution prompt.}
\begin{promptcard}\footnotesize\ttfamily\raggedright
You will be shown a Python programming task and one anonymized candidate solution.

The solution may have been written by you, or it may have been written by another\\
model solving the same task.

Your objective is not to judge whether the solution is better in general.\\
Your objective is to decide whether this solution was written by you, based on\\
your own coding choices, implementation strategy, Python idioms, formatting,\\
naming conventions, edge-case handling, and level of detail.

Task:\\{}
[TASK]

Candidate solution:\\{}
[CANDIDATE]

Was this solution written by you?

Return exactly one token:\\
1 = yes, this was written by me\\
0 = no, this was not written by me

Answer:
\end{promptcard}

\Needspace{14\baselineskip}
\paragraph{Spider attribution prompt.}
\begin{promptcard}\footnotesize\ttfamily\raggedright
You are DeepSeek V4 Flash. You are given a text-to-SQL task and one candidate response.

Task:\\{}
[TASK]

Candidate response:\\{}
[CANDIDATE]

Would you plausibly have produced this candidate response yourself for this task?\\
Return exactly one token: 1 for yes or 0 for no.
\end{promptcard}

\section{Additional Reward Diagnostics}

\subsection{Calibration and Controls}
\label{app:correctness-controls}

Table~\ref{tab:reward-calibration-controls} reports, for the three Indist.\ seeds, how far
$\pown$ is from a calibrated probability of correctness and how much correctness explains
$\log\pown$ beyond measured surface properties of the response. Each training rollout is
described by 22 surface features computed from its text: length (log characters, words, and
lines; mean word length), presence of the answer marker and whether it is the last line, format
and evaluator validity, truncation and near-truncation, markdown emphasis and code fences, the
fractions of bullet, numbered-step, equation, and comment lines, numeric and punctuation density,
the densities of first-person, hedging, and concluding words, and word overlap with the prompt.
Design controls are training progress, the visit (first or second epoch), and the rollout index;
prompt-level difficulty features (declared difficulty, prompt and reference length, numeric
count, reference complexity) enter the cross-validated models.

\emph{Cal.\ gap} is mean $\pown$ minus the share of correct rollouts, and \emph{ECE} the expected
calibration error over ten equal-frequency bins. \emph{Partial $r$} is the Pearson correlation
between $\log\pown$ and correctness after both are centered within each prompt and residualized,
by a ridge regression ($\alpha=1$), on the centered surface and design features.
$\Delta R^2_C$ and $\Delta R^2_F$ compare out-of-fold $R^2$ of ridge models of $\log\pown$ (five
folds grouped by prompt): $\Delta R^2_C$ is the gain from adding correctness to the difficulty,
design, and surface features, and $\Delta R^2_F$ the gain from adding the surface features to the
difficulty, design, and correctness variables.

\begin{table}[H]
\centering
\small
\setlength{\tabcolsep}{4pt}
\caption{Calibration and within-prompt controls, mean $\pm$ SD over the three Indist.\ seeds. Cal. gap is mean $\pown$ minus
correctness rate; ECE is expected calibration error. Partial $r$ is the within-prompt
correlation between correctness and $\log\pown$ after measured controls.
$\Delta R^2_C$ adds correctness after measured format and style features;
$\Delta R^2_F$ adds those features after correctness.}
\label{tab:reward-calibration-controls}
\begin{tabular}{@{}lrrrrr@{}}
\toprule
\textbf{Domain} & \textbf{Cal. gap} & \textbf{ECE} & \textbf{Partial $r$} &
\textbf{$\Delta R^2_C$} & \textbf{$\Delta R^2_F$}\\
\midrule
CommonsenseQA & $-.44\pm.09$ & $.44\pm.09$ & $.16\pm.05$ & $.14\pm.01$ & $.03\pm.03$\\
GSM8K & $-.37\pm.01$ & $.37\pm.01$ & $.38\pm.03$ & $.19\pm.01$ & $.05\pm.01$\\
KodCode & $+.02\pm.02$ & $.09\pm.01$ & $.28\pm.04$ & $.15\pm.02$ & $.02\pm.00$\\
Spider & $+.15\pm.00$ & $.16\pm.01$ & $.40\pm.03$ & $.20\pm.01$ & $.01\pm.01$\\
\bottomrule
\end{tabular}
\end{table}

\subsection{Within-Prompt Reward Gap}
\label{app:within-prompt-gap}

Figure~\ref{fig:within-prompt-gap} compares, for every prompt whose eight seed-23 training
rollouts include both correct and incorrect responses, the mean $\log\pown$ of its correct and
incorrect rollouts. The correct ones receive the higher mean reward in 71.4\%, 83.0\%, 78.1\%,
and 89.5\% of these prompts.

\begin{figure}[H]
\centering
\includegraphics[width=\textwidth]{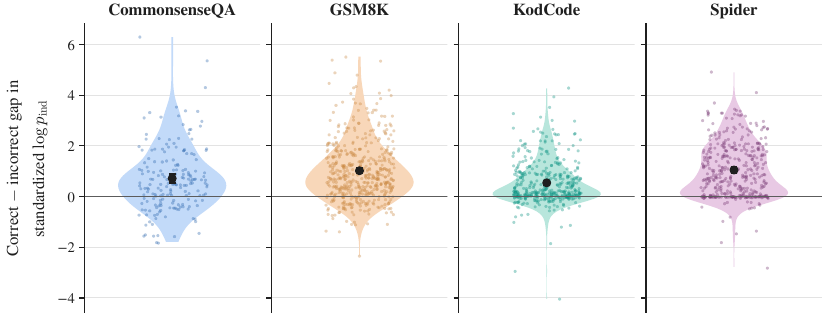}
\caption{One point per prompt whose eight seed-23 rollouts include both correct and incorrect
responses: mean $\log\pown$ of the correct rollouts minus that of the incorrect ones, after
standardizing $\log\pown$ within each domain. Black: mean gap and its 95\% bootstrap interval.}
\label{fig:within-prompt-gap}
\end{figure}

\subsection{Reward and the Teacher's Correctness Judgment}
\label{app:judge-vs-reward}

We scored the 8,000 training rollouts of each Indist.\ seed and domain a second time with the
Quality prompt, which asks the same teacher whether the response is correct, and compared the
two log-probabilities (Figure~\ref{fig:judge-vs-reward}). They move together, weakly on
CommonsenseQA and strongly on Spider. Averaged over the three seeds, Spearman $\rho$ is .59, .57,
.81, and .89, and $\log q$ explains 31\%, 41\%, 63\%, and 77\% of the variance of $\log\pown$
(5\%, 29\%, 34\%, and 53\% within prompts). As detectors of correct responses, $\pown$ reaches an
AUC of .72, .83, .78, and .88 and $q$ of .80, .89, .80, and .87: the correctness judgment is ahead
by .06 to .08 on CommonsenseQA and GSM8K, and the two are within .02 of each other on KodCode and
Spider. The seeds agree closely: between-seed standard deviations are at most .03 for $\rho$ and
$R^2$ and at most .05 for the AUCs.

\begin{figure}[H]
\centering
\includegraphics[width=\textwidth]{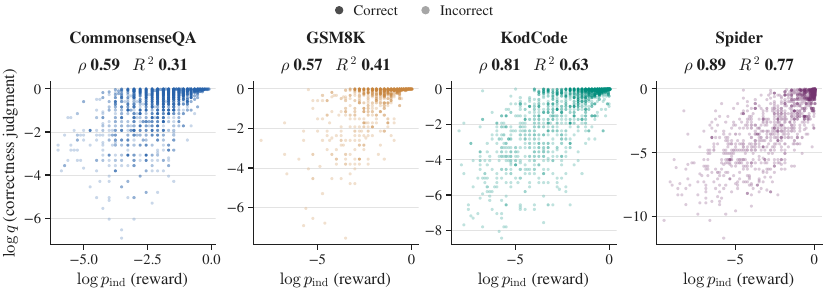}
\caption{$\log\pown$ against $\log q$, the log-probability that the same teacher, asked whether
the response is correct, assigns to ``correct'', on the training rollouts of the three Indist.\
seeds (2,502 points per domain, drawn equally from each seed); titles give Spearman $\rho$ and the
$R^2$ of a linear fit, averaged over seeds.}
\label{fig:judge-vs-reward}
\end{figure}

\subsection{Repeated Training Prompts}
\label{app:group-relative-signal}

Each prompt is visited once in each epoch. From the first to the second visit, mean $\pown$
rises in all twelve Indist.\ runs, by .021 to .057, while verifier correctness rises in ten of
them and falls in two, by 3.9 points on GSM8K (seed 23) and 0.5 on KodCode (seed 42). On Spider,
mean $\pown$ of incorrect responses rises from .36--.40 to .47--.49 in every seed while that of
correct responses stays near .95: a higher reward does not guarantee higher correctness.

\section{Outlying Runs}
\label{sec:outlying-runs}

\paragraph{Spider, Quality seed 23 (0.0\%).} This run is an instance of format drift. The
evaluator accepts a bare query or a single isolated code fence, and 1,030 of the run's 1,034
held-out responses instead wrap a fenced SQL query in prose. The drift emerges during training:
none of the first 400 rollouts has this format and 391 of the last 400 do, while the Quality
judge gives 270 of these 391 rollouts $q>\tfrac12$. The judged reward does not rise during the
drift ($-1.10$ to $-1.41$ over the first and last 400 rollouts), so the judge simply fails to
penalize a format the evaluator rejects.
The drift is visible in the response length (Figure~\ref{fig:response-length}): this run grows from
about 30 to 67 tokens, while every other Spider run stays between 21 and 41.

\paragraph{CommonsenseQA, Oracle.} Two of the three Oracle seeds collapse to four-token answers
(\texttt{Answer: X}): seed 23 falls from 82 tokens to at most five by step 15, seed 7 collapses
and recovers after step 1,400, and seed 42 never collapses. Four such answers to one question
almost always agree, so 87\% to 98\% of the groups of seed 23 have no reward contrast and its
token entropy falls from .118 to .022. Seed 23 ends at 67.9\%, against 68.9\% and 70.3\% for
the other seeds.

\begin{figure}[H]
\centering
\includegraphics[width=\textwidth]{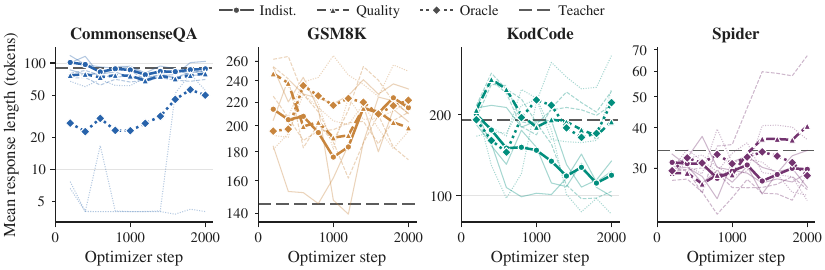}
\caption{Mean response length of the training rollouts under each reward, in 200-update bins;
thin lines are single seeds and bold lines their mean. The dashed line is the mean length of the
teacher's own 1,000 completions per domain.}
\label{fig:response-length}
\end{figure}

\section{Relations Between the Source-Attribution Tests}
\label{app:test-reductions}
\textcolor{black}{The following reductions assume equal source priors and uniformly
randomized pair order. Generation and judgment use separate, stateless calls, and a fresh
reference is drawn independently from the same teacher distribution, conditional on the task.
All calls receive the same task; its notation is suppressed in these reductions.}

\begin{proposition}[From one-message to paired attribution]
\label{prop:single-to-pair}
Suppose a LM $D$ distinguishes the 1-message variant with probability $\geq \frac{1}{2} + \epsilon$. Then, with one call to $D$ one can distinguish the paired-message variant with probability $\geq \frac{1}{2} + \epsilon$.
\end{proposition}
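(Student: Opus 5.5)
The natural approach is to query $D$ once, on a single randomly chosen position of the pair, and reuse its verdict. In the paired-message variant we receive $(y_1,y_2)$, where one response is a fresh sample from $D(\mathtt{solve},x)$ and the other from $A_\theta(x)$. The order is uniformly random, so the hidden bit $c\in\{1,2\}$ indicating which position holds the teacher's response is uniform and independent of the contents. The reduction will pick an index $j\in\{1,2\}$ uniformly at random, independent of everything else. It will then query $D(\mathtt{dist},x,y_j)$ once to get $g\in\{0,1\}$. If $g=1$ it outputs ``position $j$ is the teacher's''; otherwise it outputs ``position $3-j$ is the teacher's''. This uses exactly one call to $D$.

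The first step is to identify the distribution that $y_j$ presents to $D$. Since $j$ is uniform and independent of $c$, the event $j=c$ has probability $\tfrac12$. Conditional on $j=c$, $y_j$ is a teacher sample. Conditional on $j\neq c$, $y_j$ is an actor sample. In both cases $y_j$ has exactly the marginal it would have in the 1-message variant with the corresponding bit $b$. For this we use the appendix assumptions: stateless separate calls, and a fresh reference drawn from the same teacher distribution conditional on the task. These guarantee that conditioning on the other message of the pair does not change the law of $y_j$ or $D$'s response to it.

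Consequently, $(y_j,\ \mathbf 1[j=c])$ has the same joint law as $(y,b)$ in the 1-message game with equal priors. The second step is to observe that the reduction is correct exactly when $g=\mathbf 1[j=c]$. This is the same event as $D$ winning the 1-message game on $(y,b)$. Hence
\[
\Pr[\text{paired success}]=\tfrac12\Pr[g=1\mid y\sim D]+\tfrac12\Pr[g=0\mid y\sim A_\theta]\ \ge\ \tfrac12+\epsilon .
\]
Equivalently, in the paper's notation, the success probability is $\tfrac12+\tfrac12\delta_\mu$, and the hypothesis is exactly $\tfrac12\delta_\mu\ge\epsilon$.

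The main obstacle is the distributional-equivalence step. One must check that the 1-message advantage is measured under equal priors, so that it equals $\tfrac12$ times the true-positive rate plus $\tfrac12$ times the true-negative rate. One must also check that the presence of the unqueried companion message introduces no dependence. For the first point I would cite the stated equal-prior assumption. For the second I would argue that the two messages are conditionally independent given the task, which follows from the fresh, independent sampling in the stated setup. It would also be tempting to use a deterministic $j=1$. That is valid too, because $c$ is itself uniform, so the random choice of $j$ is only a convenience and can be dropped.
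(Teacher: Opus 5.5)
Your proof is correct and is essentially the paper's argument: the paper queries $D$ on the first message (your deterministic $j=1$ variant) and uses the uniform pair order to get success probability $\tfrac12\Pr[D(m_D)=1]+\tfrac12\Pr[D(m_A)=0]\ge\tfrac12+\epsilon$. Your random choice of $j$ and the conditional-independence discussion are harmless extras, since for a stateless call only the marginal law of the queried message matters.
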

\begin{proof}
Let $D(m)$ be the random variable of $D$'s output in the 1-message variant on response $m$. Given paired responses $m_1,m_2$, call $D(m_1)$ and output $(D,A)$ if it returns $1$, and $(A,D)$ otherwise.

Under uniform pair order, its success probability is
\[
\tfrac12\Pr[D(m_D)=1]+\tfrac12\Pr[D(m_A)=0]
\geq\tfrac12+\epsilon
\]
by the one-message hypothesis.
\end{proof}

\textcolor{black}{One can instead query the one-message judge on both responses.
If its labels differ, $(1,0)$ or $(0,1)$, return $(D,A)$ or $(A,D)$ respectively;
if they coincide, choose the assignment uniformly at random. This particular rule has
the same success probability as the one-query reduction:}
\begin{proof}
{\color{black}
Let $U=D(m_D)$ and $V=D(m_A)$ be the two binary verdicts. Then
\begin{align*}
\Pr[\mathrm{success}]
&=\Pr[U=1,V=0]+\tfrac12\Pr[U=V]\\
&=\tfrac12\bigl(\Pr[U=1]+\Pr[V=0]\bigr),
\end{align*}
which is the success probability of the one-message judge under equal source priors.
This identity does not require independence between $U$ and $V$.}
\end{proof}

\begin{proposition}[From paired to one-message attribution]
\label{prop:pair-to-single}
\textcolor{black}{Suppose a LM $D$ distinguishes the paired-message variant with
probability $\geq \frac{1}{2}+\epsilon$. Then one judgment call and one independent
\emph{generation} call to $D$ suffice to distinguish the 1-message variant with probability
$\geq \frac{1}{2}+\frac{\epsilon}{2}$.}
\end{proposition}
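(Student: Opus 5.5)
Given a single message $m$ in the 1-message variant, with equal priors it is either $m=m_D\sim D(\mathtt{solve})$ or $m=m_A\sim A_\theta$. The plan is to manufacture the missing half of a pair. We make one independent generation call to obtain a fresh reference $r\sim D(\mathtt{solve})$. We then place $m$ and $r$ in uniformly random order and make one judgment call to the paired judge. The judge returns which position it believes is the teacher's. If it points to $m$'s position we output $1$ (``$m$ came from $D$''), and otherwise we output $0$.

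The analysis splits on the source of $m$. \textbf{Case $m=m_A$:} the pair $(m,r)$ is exactly a genuine paired instance, with one actor message and one fresh teacher message in uniformly random order. This uses the stated assumption that $r$ is drawn independently from the same teacher distribution conditional on the task. By hypothesis, the paired judge identifies $r$ as the teacher's with probability $\ge \tfrac12+\epsilon$, so we correctly output $0$ with that probability. \textbf{Case $m=m_D$:} both messages are i.i.d.\ draws from $D(\mathtt{solve})$, and their order is uniformly random. The joint distribution of the ordered pair is therefore exchangeable, and whatever (possibly randomized) rule the judge uses, the label ``$m$'' lands on the position it picks with probability exactly $\tfrac12$. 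Stated more carefully: conditional on the unordered pair, the random order makes the judge's chosen position independent of which element is $m$, because the two elements have identical distributions. So we correctly output $1$ with probability exactly $\tfrac12$.

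Averaging over the equal priors gives success $\tfrac12\cdot\tfrac12+\tfrac12(\tfrac12+\epsilon)=\tfrac12+\tfrac{\epsilon}{2}$. This uses exactly one generation call and one judgment call, as claimed.

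The main obstacle is the $m=m_D$ case. One must argue rigorously that the judge cannot do better or worse than $\tfrac12$ at ``finding'' $m$ when both entries come from the teacher. This rests on exchangeability: $(m,r)$ and $(r,m)$ have the same law, and the judge's call is stateless. Because the call is stateless, the judge's output depends only on the presented ordered pair and its own randomness, not on which entry was the query. I would formalize this by writing $\Pr[\text{judge picks position of } m]=\tfrac12\Pr[J(m,r)=1]+\tfrac12\Pr[J(r,m)=2]$. Swapping the names of the i.i.d.\ variables $m,r$ in the second term turns it into $\tfrac12\Pr[J(m,r)=2]$, and the two terms then sum to $\tfrac12$. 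I would also remark that the paired hypothesis is used only in the $m_A$ case, which is why the advantage halves.
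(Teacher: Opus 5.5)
Your proof is correct and follows essentially the same route as the paper. You generate a fresh teacher reference, present it with $m$ in uniformly random order, and split on the source of $m$: exchangeability plus the random position bit give exactly $\tfrac12$ when $m$ comes from $D$, and the paired hypothesis gives at least $\tfrac12+\epsilon$ when it comes from $A$. Your computation $\tfrac12\Pr[J(m,r)=1]+\tfrac12\Pr[J(r,m)=2]=\tfrac12$ is a slightly more explicit version of the paper's exchangeability step, and the final equality in your average should read $\geq$ rather than $=$.
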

\begin{proof}
\textcolor{black}{We are given a response $m$ with an unknown source
$S\in\{D,A\}$, under equal source priors. Independently generate $m_D$ from $D$ and draw
$b\sim\operatorname{Bernoulli}(1/2)$. If $b=0$, query $D(m,m_D)$; otherwise query
$D(m_D,m)$. If the judge selects the position of our local reference $m_D$, output $A$;
otherwise output $D$.}

\textcolor{black}{If $S=D$, $m$ and the independently generated $m_D$
are identically distributed conditional on the task. By exchangeability and the uniform
position bit, the stateless judge selects our local reference with probability exactly
$1/2$, averaging over the two samples and $b$. This need not hold conditional on a fixed
position for a position-biased judge. Hence the reduction correctly outputs $D$ with
probability $1/2$ in this case.}

\textcolor{black}{If $S=A$, this is exactly the randomized paired-message
experiment. The reduction outputs $A$ whenever the judge correctly selects $m_D$, which
occurs with probability at least $1/2+\epsilon$.}

Overall we have that, in the 1-message variant, we distinguish with probability
\begin{align*}
&\tfrac12\Pr[\mathrm{correct}\mid S=D]+\tfrac12\Pr[\mathrm{correct}\mid S=A] \\
&\geq \frac{1}{2}\cdot \frac{1}{2} + \frac{1}{2}\Big(\frac{1}{2}+\epsilon\Big) \\
&= \frac{1}{2} + \frac{\epsilon}{2}
\end{align*}
\end{proof}

These are constructions of derived distinguishers, not equivalences between
the scores of independently prompted native judges.

\section{Alternative Reward Protocols}
\label{app:reward-extensions}

These reward oracles are alternatives to the log-probability reward of Section~\ref{sec:turing-test-and-reward}, not new policy optimizers.
They retain the batch and protocol contracts of Section~\ref{sec:turing-test-and-reward}.
\textsc{MirrorVote} was used to train the actor against a closed-weight teacher
(Section~\ref{sec:closed-teacher}); the generated-reference reward has not been evaluated in
policy training, and its subsection reports only the separate two-order diagnostic that was
executed.

\subsection{Reward Estimation from Repeated Binary Judgments}
\label{app:sampled-judgments}

If label log-probabilities are unavailable, an endpoint exposing sampled binary
labels can be queried $K$ times for a fixed response, without weight access. Here $K$ counts
independent judge calls, not the $G$ actor responses in a GRPO group. These judgments
give $\widehat p=m/K$, where $m$ counts own-source labels and $p$ is their
sampling probability, with
\begin{equation}
\E[\widehat p]=p,\qquad \operatorname{Var}(\widehat p)=\frac{p(1-p)}{K}\leq\frac1{4K}.
\end{equation}
For the $K=10$ setting, the unsmoothed estimator has worst-case
standard error approximately .158 and
it has only 11 unsmoothed levels, so nearby candidates can be misranked frequently. Section~\ref{sec:closed-teacher} evaluates this protocol with $K=10$ and GPT-5 Mini as the
teacher, one training seed per domain.

\paragraph{Protocol used with GPT-5 Mini.} The teacher is \texttt{openai/gpt-5-mini}, served
by OpenAI through OpenRouter without fallback to other providers, at the endpoint's default
sampling temperature (1.0) and minimal reasoning effort. Its output is constrained to a JSON
object with a single binary field, and no log-probabilities are requested. Each rollout receives
ten independent verdicts, so a full run of 8,000 rollouts makes 80,000 judgments. Transient
HTTP errors (rate limits and gateway errors) are retried up to eight times; a response that
does not parse as a valid label stops training rather than being resampled. Training resumed
from the last saved checkpoint when a run was interrupted by API errors. One prompt serves all
four domains:
\begin{promptcard}\footnotesize\ttfamily\raggedright
You will be shown one task and one anonymized candidate response.

The response may have been written by you, or by another model solving the same task. Decide
only whether it could plausibly be your own response.

Task:\\{}
[TASK]

Candidate response:\\{}
[CANDIDATE]

Return exactly one JSON object with no explanation: \{"label": 1\} if it could plausibly be
your response, or \{"label": 0\} otherwise.
\end{promptcard}

\begin{algorithm}[H]
\caption{\textsc{MirrorVote}: reward from repeated binary judgments}
\label{alg:closed-logitless-iar}
\begin{algorithmic}[1]
\color{black}
\State \textbf{Input:} batch $\mathcal B$; fixed endpoint $D_{\rm API}$ and protocol $(T_D,\mathcal N,\kappa)$; integer $K\geq1$
\State \textbf{Output:} terminal rewards $\mathbf r=(r_1,\ldots,r_{n_B})$
\State Initialize $\mathbf r\gets(0,\ldots,0)$
\For{$i=1,\ldots,n_B$}
    \State Initialize teacher-label count $m_i\gets0$
    \For{$j=1,\ldots,K$}
        \State Independently sample and parse $z_{i,j}$ from $D_{\rm API}(T_D(x_i,\mathcal N(y_i)))$ under $\kappa$, without requesting label log-probabilities
        \If{$z_{i,j}\notin\{0,1\}$ or the endpoint contract fails}
            \State \textbf{Abort} without returning a reward vector
        \EndIf
        \State $m_i\gets m_i+z_{i,j}$
    \EndFor
    \State $r_i\gets\log\!\left((m_i+1)/(K+2)\right)$
\EndFor
\State \Return $\mathbf r$
\end{algorithmic}
\end{algorithm}

\textcolor{black}{\textsc{MirrorVote} estimates the probability
$p_\kappa=\Pr_\kappa(Z=1\mid x,y)$ of the deployed label sampler. This equals the
unclipped two-label probability underlying $\pown$ only if the two decoding laws agree.
For independent valid labels, $m\sim\operatorname{Binomial}(K,p_\kappa)$ and
$\E[(m+1)/(K+2)]=(Kp_\kappa+1)/(K+2)$: smoothing and the logarithm make the finite-$K$
reward a biased estimator of $\log p_\kappa$. Repeating a deterministic verdict cannot
recover a graded probability. Semantic contract violations stop scoring rather than being
resampled until a valid label appears.}

\subsection{Generated-Reference Test and Binary Reward}
\label{app:generated-reference}

A \emph{generated-reference} test draws an independent teacher response
$Y_D\sim\pi_D(\cdot\mid x)$ and compares it with an actor response $Y_A$.
Let $B$ be the uniformly randomized teacher position: $C_B=Y_D$ and
$C_{3-B}=Y_A$, with $B\in\{1,2\}$. The judge selects a purported teacher
position $\widehat B$. For the displayed pair, the actor's favorable score is
\begin{equation}
p_{\mathrm{gen}}(x,y_A,y_D;B)
=\Pr_D(\widehat B\neq B\mid x,C_1,C_2).
\label{eq:generated-reference}
\end{equation}
An order-averaged score averages this quantity over the two positions.
The one-message, or \emph{latent-reference}, formulation instead asks for a source
label given only $(x,y)$. Under equal source priors and independent, stateless
teacher calls, the two tests admit the reductions with explicit query costs in
Appendix~\ref{app:test-reductions}. These constructions do not imply equivalent
scores or training signals from native one-message and paired prompts.

The two-order attribution probe below was executed; policy training with
\textsc{MirrorDuel} was not. Its continuous scores and the proposed binary
reward are distinct experimental objects.

\subsubsection{Order Sensitivity and Source Attribution}
\label{sec:pairwise}

We also evaluate the generated reference test in Eq.~\ref{eq:generated-reference}, a more
literal imitation game. An independent DeepSeek call produces and caches $y_D$ before seeing
the actor answer; a fresh stateless DeepSeek call then identifies which of the two anonymous
responses it is more likely to have generated. Each actor and reference pair is judged in both
orders. The judge is instructed to solve privately and return candidate index 0 or 1.
Let $p_A^{\rm first}$ ($p_A^{\rm second}$) be the normalized first token probability
assigned to the actor's candidate position when that candidate is presented first (second).
Thus both quantities are actor attribution probabilities, rather than $\pown$ from the
latent reference test. Their difference
$\Delta_{\rm order}=p_A^{\rm first}-p_A^{\rm second}$ is the paired order effect, while
$p_{\rm pair}=\tfrac12(p_A^{\rm first}+p_A^{\rm second})$ is the order averaged score.

\begin{table}[H]
\centering
\small
\setlength{\tabcolsep}{4pt}
\caption{Generated-reference probe: 100 prompts, four actor rollouts per prompt,
two orders, and 800 judge calls per domain. CQA denotes CommonsenseQA. The upper
block reports the mean and SD of the pair-level order-averaged actor attribution
score and the means for each order. The lower block stratifies the paired order
effect by deterministic correctness; $n$ counts actor--teacher pairs and bootstrap
resampling is by task. Bold marks the largest overall and stratified order effects.}
\label{tab:pairwise}
\label{tab:pairwise-correctness}
\begin{tabular}{@{}lrrrrrrrr@{}}
\toprule
\multicolumn{9}{@{}l}{\textbf{Overall Actor Attribution}}\\
\midrule
\textbf{Statistic} & \multicolumn{2}{c}{\textbf{CQA}} & \multicolumn{2}{c}{\textbf{GSM8K}} & \multicolumn{2}{c}{\textbf{Spider}} & \multicolumn{2}{c}{\textbf{KodCode}}\\
\midrule
Mean $\bar p_{\rm pair}$ & \multicolumn{2}{r}{0.414} & \multicolumn{2}{r}{0.438} & \multicolumn{2}{r}{0.394} & \multicolumn{2}{r}{0.428}\\
SD of $p_{\rm pair}$ & \multicolumn{2}{r}{0.109} & \multicolumn{2}{r}{0.107} & \multicolumn{2}{r}{0.128} & \multicolumn{2}{r}{0.103}\\
$p_A^{\rm first}$ & \multicolumn{2}{r}{0.734} & \multicolumn{2}{r}{0.770} & \multicolumn{2}{r}{0.665} & \multicolumn{2}{r}{0.773}\\
$p_A^{\rm second}$ & \multicolumn{2}{r}{0.095} & \multicolumn{2}{r}{0.107} & \multicolumn{2}{r}{0.124} & \multicolumn{2}{r}{0.083}\\
Order effect $\Delta_{\rm order}$ & \multicolumn{2}{r}{0.639} & \multicolumn{2}{r}{0.663} & \multicolumn{2}{r}{0.541} & \multicolumn{2}{r}{\textbf{0.690}}\\
\midrule
\multicolumn{9}{@{}l}{\textbf{Order Effect by Correctness of $(D,A)$}}\\
\midrule
\textbf{Correctness} & \multicolumn{2}{c}{\textbf{CQA}} & \multicolumn{2}{c}{\textbf{GSM8K}} & \multicolumn{2}{c}{\textbf{Spider}} & \multicolumn{2}{c}{\textbf{KodCode}}\\
\cmidrule(lr){2-3}\cmidrule(lr){4-5}\cmidrule(lr){6-7}\cmidrule(l){8-9}
$(D,A)$ & \textbf{Mean} & $\boldsymbol{n}$ & \textbf{Mean} & $\boldsymbol{n}$ & \textbf{Mean} & $\boldsymbol{n}$ & \textbf{Mean} & $\boldsymbol{n}$\\
\midrule
Both correct & 0.691 & 176 & 0.702 & 206 & 0.633 & 165 & \textbf{0.734} & 141\\
$D$ correct, $A$ wrong & 0.593 & 156 & 0.627 & 170 & 0.455 & 147 & 0.656 & 155\\
$D$ wrong, $A$ correct & 0.666 & 23 & 0.563 & 4 & 0.675 & 23 & 0.710 & 11\\
Both wrong & 0.578 & 45 & 0.582 & 20 & 0.452 & 65 & 0.676 & 93\\
\bottomrule
\end{tabular}
\end{table}

The same response receives 64 to 66 percentage points more actor attribution probability
merely by appearing first (95\% intervals: [61.8, 65.9] pp on CQA and [64.3, 68.2] pp on
GSM8K). This pattern is not explained by response correctness.
The lower block of Table~\ref{tab:pairwise} stratifies the cached pairs by
deterministic correctness of the teacher and actor responses. The order effect
persists when both candidates are correct, when only one is correct, and when both are wrong;
the reverse mixed stratum has few pairs and is consequently descriptive. The same two-order
analysis on Spider and KodCode reproduces this pattern. Across the same calls, binary-mass
coverage---the share assigning at least 90\% probability mass to the two permitted labels---is
49.6\% on CQA and 60.1\% on GSM8K. Emitted-label agreement with the binary argmax is
86.4\% and 87.6\%, respectively. The pairwise configuration therefore does not provide a
stable training signal. This result concerns this DeepSeek prompt
and label configuration rather than pairwise discriminators in general. Ordering bias has
also been documented in model self-recognition settings
\citep{panickssery2024self,davidson2024selfrecognition}.

\subsubsection{MirrorDuel: A Generated-Reference Binary Reward}

\textcolor{black}{\textsc{MirrorDuel} has a separate setup stage. Before actor
rollouts, generate one reference $C_D(x)\sim\pi_D(\cdot\mid x)$ per training prompt using
only that prompt and frozen generation settings; retain it for every response group for
that prompt. A separate pilot uses $n\geq1$ independent task-level replicates with
independent reference draws, judging each
actor--reference pair in both orders with fresh judge calls. Let $w_{j,k}^{\rm first}$
and $w_{j,k}^{\rm second}$ indicate that the actor is selected in these two orders.
The audited pilot record $\mathcal H$ retains the raw binary outcomes and frozen protocol
identifiers, together with
$d_j=G^{-1}\sum_{k=1}^{G}(w_{j,k}^{\rm first}-w_{j,k}^{\rm second})\in[-1,1]$ and
$v_j\in\{0,1\}$, where $v_j=1$ iff at least one fixed order gives both reward values
within group $j$. Invalid pilot calls invalidate the record. The protocol, pilot size $n$,
tolerance $0<\tau<1$, and error level $0<\alpha<1$ are fixed before this pilot,
separately per domain; subsequent protocol tuning requires fresh pilot data.
Its confidence radius counts independent task-level replicates, not correlated responses
within a replicate.}

\begin{algorithm}[t]
\caption{\textsc{MirrorDuel}: generated-reference binary reward}
\label{alg:binary-duel-iar}
\begin{algorithmic}[1]
\color{black}
\State \textbf{Input:} batch $\mathcal B$; frozen pairwise protocol $(D,T_D^{\rm pair},\mathcal N,\kappa)$; reference cache $C_D$; audited pilot record $\mathcal H$; $\tau,\alpha\in(0,1)$
\State \textbf{Output:} binary rewards $\mathbf r\in\{0,1\}^{n_B}$
\State Read $n$ task-level summaries $(d_j,v_j)$ and protocol identifiers from $\mathcal H$
\If{$n=0$, pilot records are invalid or protocol-mismatched, or $C_D$ is invalid or incomplete}
    \State \textbf{Abort} without returning a reward vector
\EndIf
\State $\widehat\Delta\gets n^{-1}\sum_{j=1}^{n}d_j$; $h_n\gets\sqrt{2\log(2/\alpha)/n}$
\If{$|\widehat\Delta|+h_n>\tau$ or $\sum_{j=1}^{n}v_j=0$}
    \State \textbf{Abort}: pilot evidence is insufficient to enable this reward
\EndIf
\State Initialize $\mathbf r\gets(0,\ldots,0)$
\State Draw independent $b_g\sim\operatorname{Bernoulli}(1/2)$ once per rollout group $g$, independently of its responses
\For{$i=1,\ldots,n_B$}
    \State $(a_i,t_i)\gets(\mathcal N(y_i),\mathcal N(C_D(x_i)))$
    \If{$b_{g_i}=1$}
        \State $(c_{i,1},c_{i,2})\gets(a_i,t_i)$
    \Else
        \State $(c_{i,1},c_{i,2})\gets(t_i,a_i)$
    \EndIf
    \State Query a fresh, stateless $D$ judge on $T_D^{\rm pair}(x_i,c_{i,1},c_{i,2})$ under $\kappa$
    \State Parse $z_i\in\{0,1\}$: $1$ selects the first response, $0$ the second
    \If{the pinned endpoint or binary-label contract fails}
        \State \textbf{Abort} without returning a reward vector
    \EndIf
    \State $r_i\gets\mathbf{1}\{z_i=b_{g_i}\}$
\EndFor
\State \Return $\mathbf r$
\end{algorithmic}
\end{algorithm}

\textcolor{black}{\textsc{MirrorDuel} uses one binary judgment per actor response
after reference generation and the pilot. The group shares its order bit: a judge that
always selects the first position then gives a constant reward within that group and hence
zero relative advantage. Independent per-response orders would instead introduce spurious
within-group reward variation. This control does not remove content-dependent position
effects. Under the stated pilot assumptions, Hoeffding's inequality bounds the absolute mean order effect
by $|\widehat\Delta|+h_n$ with probability at least $1-\alpha$; this is a statement about
the pilot distribution, not correctness alignment or later actor distributions. The
nondegeneracy check likewise does not prove that the reward contains useful task information.}

\textcolor{black}{For a fixed actor--reference pair, the expectation of the binary
reward is one half the sum of the probabilities of selecting the actor in the two orders,
under the actual verdict law $\kappa$. It need not equal the normalized-log-probability
score $p_{\rm pair}$ reported in Appendix~\ref{sec:pairwise}, especially for a greedy judge.
The proposed binary pilot and training arm have not been run. The reported continuous
two-order probe shows severe position bias and does not validate the proposed training
algorithm.}

\end{document}